\documentclass[final]{article}

\usepackage[main]{neurips_2025}

\usepackage[utf8]{inputenc} 
\usepackage[T1]{fontenc}    
\usepackage{hyperref}       
\usepackage{url}            
\usepackage{booktabs}       
\usepackage{amsfonts}       
\usepackage{nicefrac}       
\usepackage{microtype}      
\usepackage{xcolor}         

\usepackage{graphicx}
\usepackage{natbib}
\usepackage{amsmath}
\usepackage{algorithm}
\usepackage{algorithmic}
\usepackage[table]{xcolor}

\usepackage{amsthm}

\newtheorem{theorem}{Theorem}
\newtheorem{lemma}{Lemma}

\theoremstyle{definition}

\theoremstyle{remark}
\newtheorem{remark}{Remark}

\newcommand{\AR}{\operatorname{AR}}
\title{\texorpdfstring{From $O(mn)$ to $O(r^2)$: Two-Sided Low-Rank Communication
for Adam in Distributed Training with Memory Efficiency}{From O(mn) to O(r^2): Two-Sided Low-Rank Communication for Adam}}

\author{
  Sizhe Dang\thanks{equal contribution.} \\
  Xi'an Jiaotong University\\
  \texttt{darknight1118@stu.xjtu.edu.cn} 
  \And
  Jiaqi Shao\footnotemark[1] \\
  Xi'an Jiaotong University \\
  \texttt{shaojiaqi@stu.xjtu.edu.cn} 
  \And
  Xiaodong Zheng \\
  Xi'an Jiaotong University \\
  \texttt{zxd\_xjtu@stu.xjtu.edu.cn} 
  \And
  Guang Dai \\
  SGIT AI Lab \\
  \texttt{guang.dai@gmail.com} 
  \And
  Yan Song \\
  University of Science and Technology of China \\
  \texttt{clksong@gmail.com} 
  \And
  Haishan Ye\thanks{Corresponding author.} \\
  Xi'an Jiaotong University \\
  \texttt{yehaishan@xjtu.edu.cn} 
}

\begin{document}

\makeatletter
\renewcommand{\@noticestring}{}
\makeatother
\maketitle

\begin{abstract}
As foundation models continue to scale, pretraining increasingly relies on data-parallel distributed optimization, making bandwidth-limited gradient synchronization a key bottleneck.
Orthogonally, projection-based low-rank optimizers were mainly designed for memory efficiency, but remain suboptimal for communication-limited training: one-sided synchronization still transmits an $O(rn)$ object for an $m\times n$ matrix gradient and refresh steps can dominate peak communicated bytes.
We propose \textbf{TSR}, which brings \textbf{t}wo-\textbf{s}ided low-\textbf{r}ank communication to Adam-family updates (\textbf{TSR-Adam}) by synchronizing a compact core $U^\top G V\in\mathbb{R}^{r\times r}$, reducing the dominant per-step payload from $O(mn)$ to $O(r^2)$ while keeping moment states in low-dimensional cores.
To further reduce the peak communication from subspace refresh, TSR-Adam adopts a randomized SVD-based refresh that avoids full-gradient synchronization.
We additionally extend low-rank communication to embedding gradients with embedding-specific ranks and refresh schedules, yielding additional communication and memory savings over keeping embeddings dense.
Across pretraining from 60M to 1B model scales, TSR-Adam reduces average communicated bytes per step by $13\times$, and on GLUE fine-tuning it reduces communication by $25\times$, while achieving comparable performance; we further provide a theoretical stationarity analysis for the proposed update. Code is available at \url{https://github.com/DKmiyan/TSR-Adam}.
\end{abstract}

\begin{figure}[th]
  \centering
  \begin{minipage}{0.32\columnwidth}
    \centering
    \includegraphics[width=\linewidth]{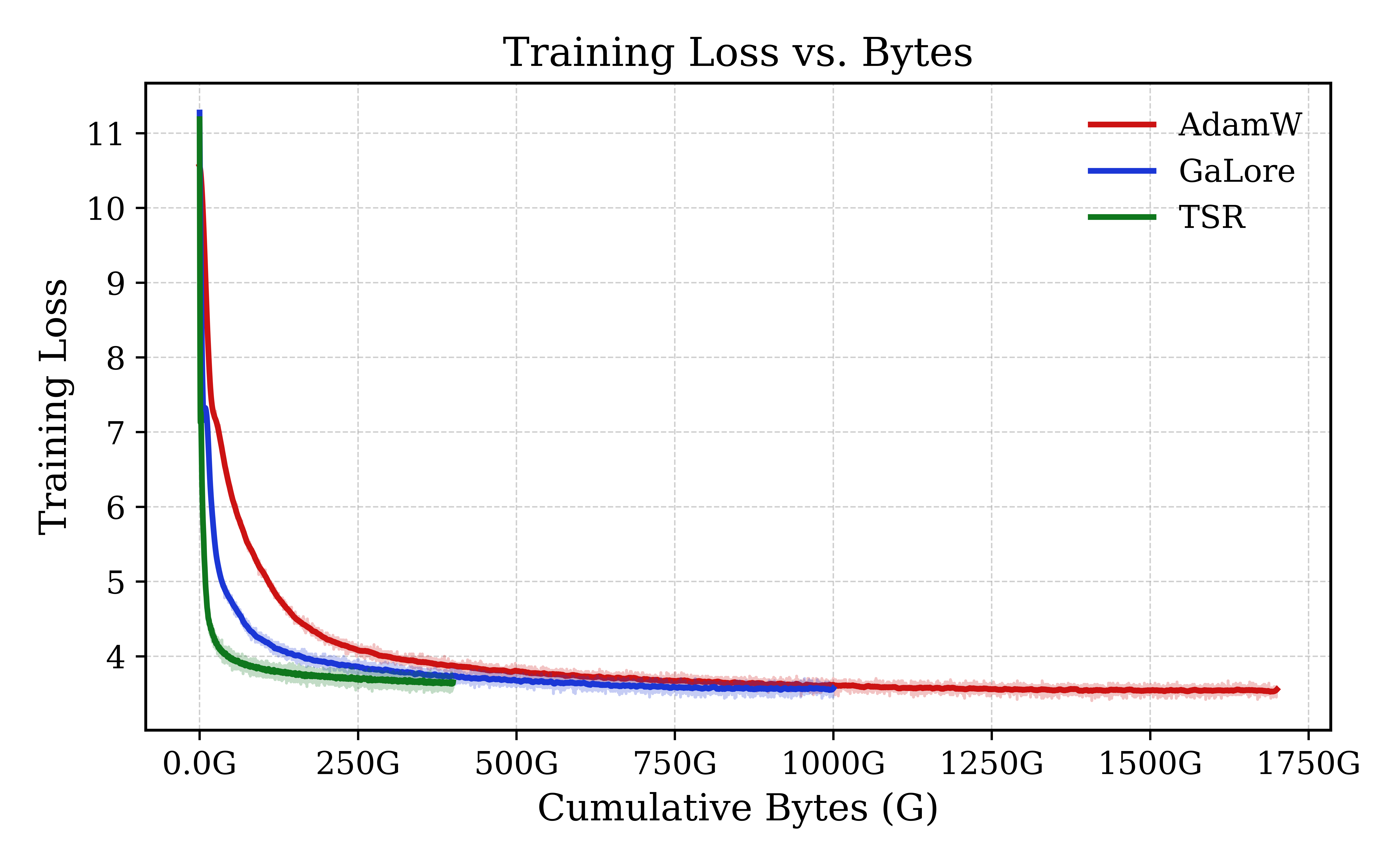} 
    (a)LLaMA-60M
  \end{minipage}
  \hfill
  \begin{minipage}{0.32\columnwidth}
    \centering
    \includegraphics[width=\linewidth]{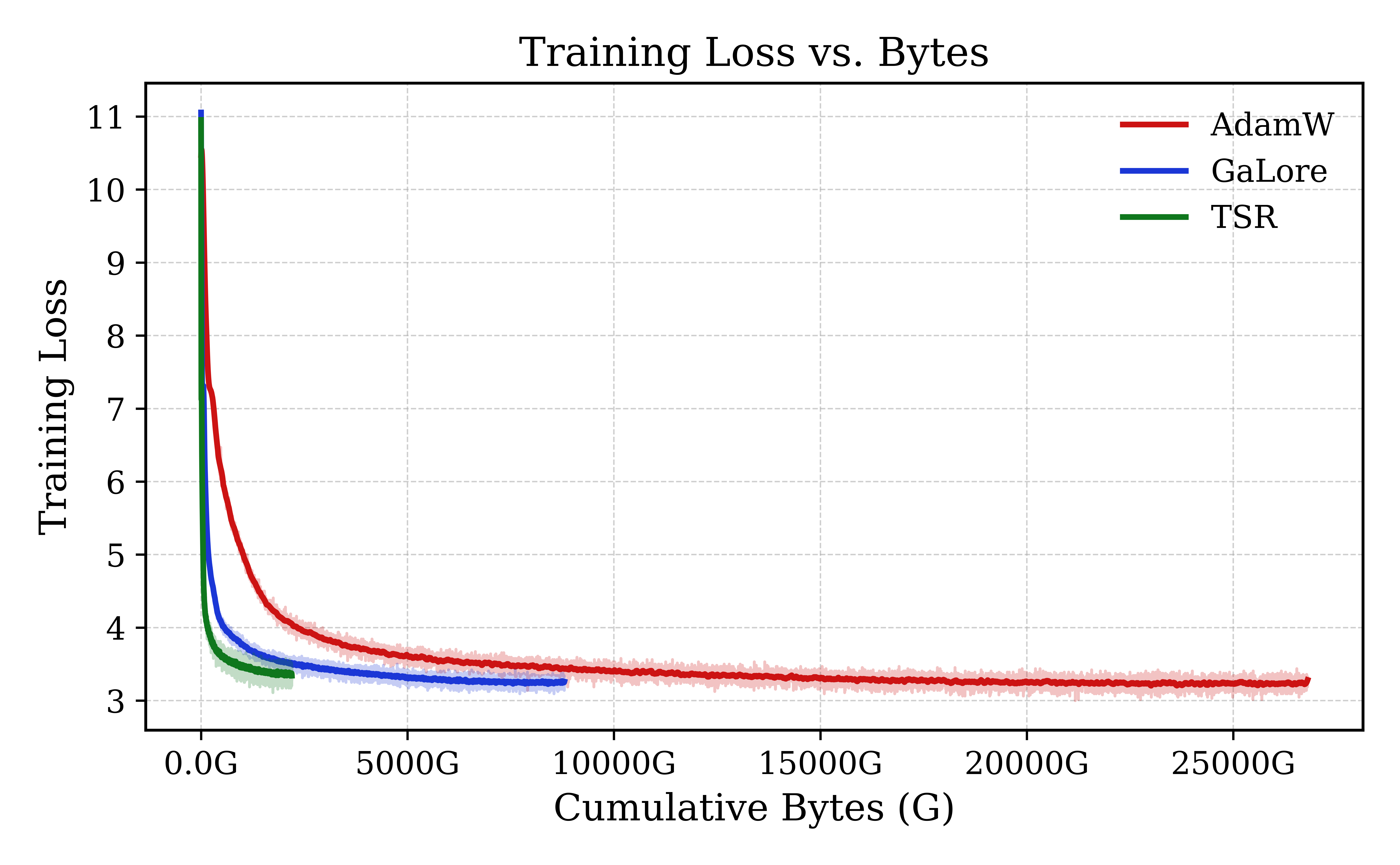} 
    (b)LLaMA-130M
  \end{minipage}
  \hfill
  \begin{minipage}{0.32\columnwidth}
    \centering
    \includegraphics[width=\linewidth]{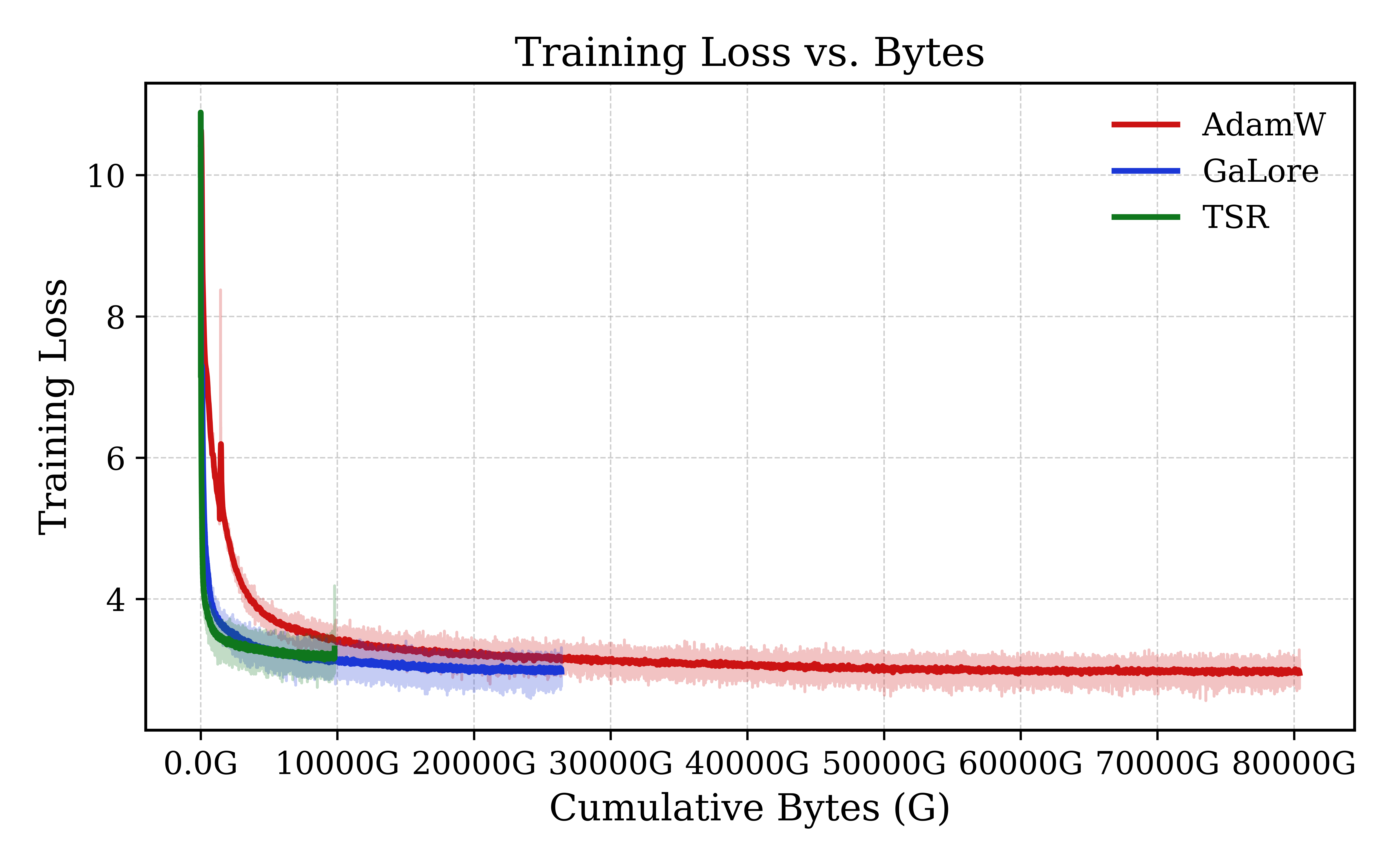} 
    (c)LLaMA-350M
  \end{minipage}
  \caption{
    \textbf{Bytes-to-Loss.} Training loss as a function of cumulative communicated bytes for representative model scales.
    TSR-Adam reaches lower loss under the same communication budget compared with baselines.
    (a)--(c) correspond to three representative model scales.
  }
  \label{fig:bytes_to_loss}
\end{figure}

\section{Introduction}

\label{sec:intro}

Large-scale pretraining has established itself as the cornerstone of modern foundation models~\cite{brown2020language,bommasani2021opportunities,touvron2023llama}.
However, practical training at these scales relies heavily on distributed data-parallel execution, where synchronizing gradients across many accelerators can become a substantial bottleneck.
This limitation is exacerbated by the disparity between high-bandwidth on-node interconnects and lower-bandwidth cross-node links (e.g., NVLink vs.~PCIe), so communication overhead can quickly dominate once synchronization traverses bandwidth-limited boundaries.
As model sizes and cluster scales grow, reducing the \emph{gradient synchronization volume} becomes a central systems constraint for scalable pretraining~\cite{narayanan2021efficient}.

Existing work mitigates distributed-training overhead from multiple angles.
System-level techniques such as sharding~\cite{rajbhandari2021zero} and generic compression~\cite{tang20211,karimireddy2019error} reduce the \emph{effective} synchronization cost, but largely operate independently of the \emph{matrix representation} being synchronized.
Complementary to these system-level remedies, projection-based low-rank optimizers such as GaLore~\cite{zhao2024galore} maintain low-dimensional optimizer states for memory efficiency, yet remain suboptimal for communication-limited training: with one-sided projection, synchronizing an $m\times n$ matrix gradient still requires transmitting an $O(rn)$ (or $O(mr)$) object, and SVD-style refresh can dominate the \emph{peak} communicated bytes.
Most recently, concurrent work such as GreedyLore~\cite{chen2025greedy} augments GaLore-style low-rank compression with error feedback and improved guarantees; however, it inherits the GaLore-type one-sided communication object (scaling with a matrix dimension) and does not achieve the $O(r^2)$ shared-core synchronization.

Motivated by a communication-first view of data-parallel training, we focus on the gradient as the synchronization object and measure efficiency by the transmitted bytes required to reach a target loss (Figure~\ref{fig:bytes_to_loss}).
For a matrix gradient $G\in\mathbb{R}^{m\times n}$, dense synchronization scales as $O(mn)$ per step, while one-sided low-rank methods still communicate an $O(rn)$ (or $O(mr)$) factor and remain expensive when one dimension is large (Table~\ref{tab:comm_complexity}). 
A natural way to break this scaling is to maintain two low-rank projection bases, $U\in\mathbb{R}^{m\times r}$ and $V\in\mathbb{R}^{n\times r}$, and synchronize only the core $C=U^\top G V\in\mathbb{R}^{r\times r}$, reducing the dominant payload to $O(r^2)$ while storing Adam moments in the same core space.
Making this approach practical hinges on two issues: (i) refresh steps must update  base without falling back to full-gradient synchronization, otherwise peak communicated bytes can dominate; and (ii) communication-heavy layers should be accounted for explicitly, notably embeddings, which are often kept dense in prior low-rank treatments but can occupy a non-trivial fraction of synchronization volume (Figure~\ref{fig:embedding}).

\begin{table}[t]
  \caption{Communication objects and scaling laws for synchronizing a matrix gradient $G\in\mathbb{R}^{m\times n}$.
  The dominant communicated payload changes from $O(mn)$ (dense) or $O(rn)$ (one-sided) to $O(r^2)$ (two-sided core).}
  \label{tab:comm_complexity}
  \begin{center}
    \begin{small}
      \begin{sc}
        \begin{tabular}{lccc}
          \toprule
          Method & Synchronized object & Size & Scaling \\
          \midrule
          AdamW & $G$ & $mn$ & $O(mn)$ \\
          LORA & $G_A,G_B(W^{'}=W+AB)$ & $rm+rn$ & $O(r(m+n))$ \\   
          One-sided  & $C=U^\top  G$ & $rn$ (or $mr$) & $O(rn)$ \\
          TSR  & $C=U^\top G V$ & $r^2$ & $O(r^2)$ \\
          \bottomrule
        \end{tabular}
      \end{sc}
    \end{small}
  \end{center}
  \vskip -0.1in
\end{table}

\begin{table}[th]
  \caption{The number of parameters in the Embedding and Linear layer weights, as well as in the optimizer state. Assume $W\in\mathbb{R}^{m\times n}$, rank $r$, Embedding rank $r_e$ and vocabulary size $\mathcal{V}$.}
  \label{tab:mem_complexity}
  \begin{center}
    \begin{small}
      \begin{sc}
        \begin{tabular}{lcc}
          \toprule
          Method & Weights &Optimizer State \\
          \midrule
          Embedding Layer&  &  \\
          \midrule
          Adam & $\mathcal{V}\times m$ & $\mathcal{V}\times m + 2\mathcal{V}\times m $ \\
          LoRA & $\mathcal{V}\times m$ &  $\mathcal{V}\times m + 2\mathcal{V}\times m $\\   
          One-sided  & $\mathcal{V}\times m $& $\mathcal{V}\times m+2\mathcal{V}\times m$ \\
          TSR & $\mathcal{V}\times m$ & $\mathcal{V}\times r_e+ r_e\times m+2r_e^2$ \\
          \midrule
          Linear Layers&  & \\
          \midrule
          Adam & $mn$ & $2mn$ \\
          LORA  & $mn+rm+rn$&$2mr + 2nr$ \\   
          One-sided  & $mn$ &$mr+2nr$ \\
          TSR  & $mn$& $mr+nr+2r^2$ \\
          \bottomrule
        \end{tabular}
      \end{sc}
    \end{small}
  \end{center}
  \vskip -0.2in
\end{table}

To address these two issues, we propose \textbf{TSR}, a two-sided low-rank communication mechanism for data-parallel training; in this paper we instantiate TSR with Adam-family updates, yielding \textbf{TSR-Adam}(Figure~\ref{fig:overview}).
At each step, TSR-Adam synchronizes only the core $C=U^\top G V$ and performs Adam moment updates in this same core space, reducing the dominant per-step communication from $O(mn)$ (dense) or $O(rn)$/$O(mr)$ (one-sided) to $O(r^2)$, while retaining low-dimensional moment states and thus the memory benefits of low-rank training.
To ensure subspace maintenance does not dominate \emph{peak} bytes, TSR-Adam refreshes the two projection bases via a randomized SVD procedure that communicates only low-dimensional sketches, avoiding full-gradient synchronization.
We further apply the same low-rank communication principle to \emph{embedding} gradients with embedding-specific ranks and refresh schedules, yielding additional communication and memory savings over keeping embeddings dense.
We evaluate TSR-Adam on pretraining from 60M to 1B model scales and on GLUE fine-tuning benchmarks, and provide a convergence analysis for the resulting update. Overall, our contributions are summarized as follows:

\begin{itemize}
    \item \textbf{Communication-first rethinking of low-rank optimizers.}
    We demonstrate that projection-based low-rank optimizers, originally developed for \emph{memory efficiency}, can be reframed as a communication mechanism. We optimize them explicitly for bandwidth-limited data-parallel training, surpassing the limitations of prior one-sided designs.

    \item \textbf{Two-sided $O(r^2)$ core synchronization with preserved memory benefits.}
    We introduce \textbf{TSR-Adam}, which synchronizes a compact core $C=U^\top G V\in\mathbb{R}^{r\times r}$ for each matrix gradient, reducing the dominant per-step payload from $O(mn)$ (dense) and $O(rn)$ (one-sided) to $O(r^2)$, while keeping Adam moment states in the same low-dimensional core space.

    \item \textbf{Practical peak-byte reduction and embedding-aware design.}
    We develop a randomized SVD-based refresh that avoids full-gradient synchronization to control \emph{peak} communication on refresh steps, and we extend low-rank communication to \emph{embedding} gradients with embedding-specific ranks and refresh schedules, yielding additional communication and memory savings over keeping embeddings dense.

    \item \textbf{Empirical gains and theory.}
    Across pretraining from 60M to 1B model scales, TSR-Adam reduces average communicated bytes per step by $13\times$ and on GLUE fine-tuning it reduces communication by $25\times$, with comparable performance; we further provide a theoretical stationarity guarantee for the resulting update.


\end{itemize}

\begin{figure*}[th]
  \begin{center}
    \centerline{\includegraphics[width=\textwidth]{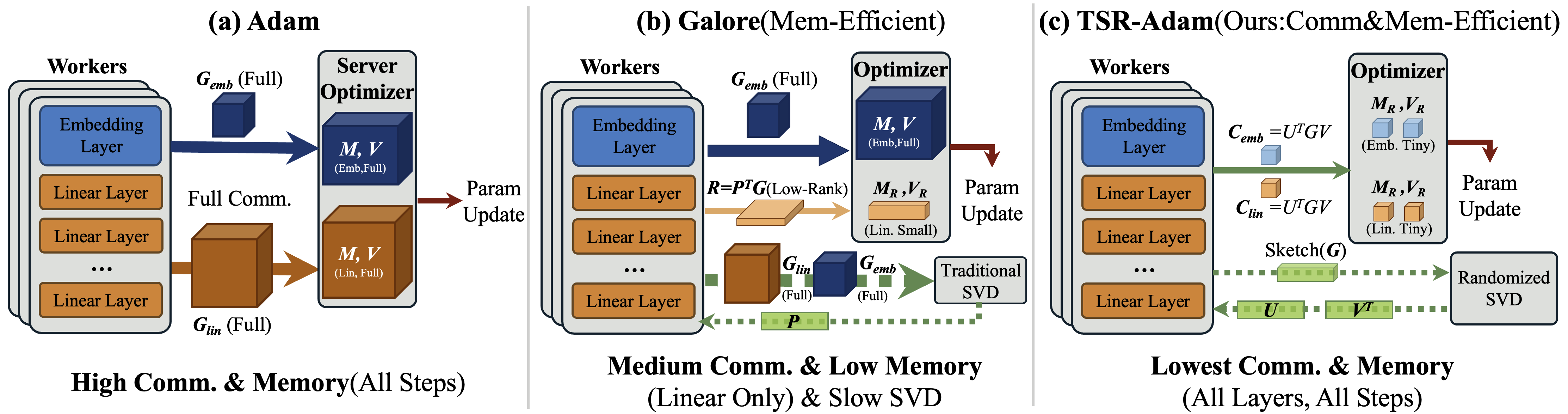}}
\caption{
  \textbf{Comparison of Communication Mechanisms.}
  Visualizing synchronized objects (\textbf{block volume}) and bandwidth usage (\textbf{arrow width}).
  \textbf{(a) Adam} transmits dense gradients ($O(mn)$).
  \textbf{(b) GaLore} compresses linear layers ($O(rn)$) but leaves embeddings dense (large blue cube) and uses heavy SVD refresh.
  \textbf{(c) TSR-Adam} synchronizes tiny $r \times r$ cores ($O(r^2)$) across \emph{all} layers and uses lightweight sketches (green bars) for refresh, achieving the lowest communication footprint.
}

    \label{fig:overview}
  \end{center}
  \vskip -0.2in
\end{figure*}

\section{Related Work}
\label{sec:related}

\paragraph{Communication bottlenecks in large-scale LLM pretraining.}
As foundation models continue to scale~\cite{brown2020language,bommasani2021opportunities,touvron2023llama},
pretraining increasingly relies on data-parallel (often hybrid-parallel) distributed optimization, where dense gradient synchronization can become a dominant cost once training spans many devices and nodes~\cite{narayanan2021efficient,rasley2020deepspeed,shoeybi2019megatron,jiang2024megascale}.
Prior work mitigates distributed training overheads through multiple routes:
(i) \emph{state/gradient sharding} such as ZeRO-style partitioning that removes redundancy across workers~\cite{rajbhandari2020zero,rajbhandari2021zero,zhao2023pytorch};
(ii) \emph{quantization/compression} that transmits low-bit updates with convergence guarantees (e.g., QSGD)~\cite{alistarh2017qsgd} and extends to adaptive optimizers (e.g., 1-bit Adam)~\cite{tang20211};
(iii) \emph{sparsification} that communicates only salient entries with error feedback/momentum correction (e.g., DGC)~\cite{lin2017deep,karimireddy2019error};
and (iv) \emph{structured compression} such as low-rank matrix-factor communication, where gradients are approximated by low-rank factors to reduce payload~\cite{vogels2019powersgd}.
Recent work also explores GaLore-style low-rank compression with improved error control and convergence analysis (e.g., GreedyLore)~\cite{chen2025greedy}.
Importantly, most existing low-rank compressors synchronize factors or subspaces whose size still scales with at least one large matrix dimension (commonly $O(r(m+n))$ or $O(rn)$), and  subspace-refresh steps can induce pronounced \emph{peak} communication.

\paragraph{Low-rank optimization for memory-efficient training.}
In addition to bandwidth, LLM training is often constrained by memory due to optimizer states and activation storage.
Low-rank methods reduce memory by restricting adaptation or optimization to compact subspaces.
For parameter-efficient fine-tuning, LoRA~\cite{hu2022lora} injects trainable low-rank adapters while keeping pretrained weights frozen, with widely-used variants improving rank allocation, quantization, and stability (e.g., AdaLoRA, QLoRA, DoRA)~\cite{zhang2023adalora,dettmers2023qlora,liu2024dora}.
For \emph{pretraining} and full-parameter learning, projection-based low-rank optimizers such as GaLore~\cite{zhao2024galore} maintain full-rank weights but project gradients into a learned low-rank subspace to reduce optimizer-state memory, with subsequent extensions improving scalability and refresh/staleness handling (e.g., GaLore 2, Tensor-GaLore; Online Subspace Descent; ``Breaking the Frozen Subspace'')~\cite{su2025galore,george2024tensor,liang2024memory,zhang2025breaking}.
Overall, much of the low-rank optimizer literature is primarily driven by \emph{memory efficiency}, whereas optimizing for \emph{communication-limited} pretraining---especially achieving dimension-independent payload scaling and accounting for peak communicated bytes---remains comparatively under-explored.

\section{Method}
\label{sec:method}

\subsection{Background: AdamW}
\label{subsec:background}

We consider data-parallel distributed training with $N$ workers.
Let $f(w)$ denote the training objective and $w$ the model parameters.
At step $t$, worker $i$ computes a stochastic gradient $g_{t,i}=\nabla f_i(w_t)$ over its local mini-batch.
In standard data parallelism, the optimizer update is applied using the \emph{globally aggregated} gradient
\begin{equation}
\bar{g}_t \;=\; \frac{1}{N}\sum_{i=1}^{N} g_{t,i},
\label{eq:dp_grad_avg}
\end{equation}
which is typically obtained via an all-reduce collective.
For clarity, we first review the AdamW update under dense synchronization and then identify the tensors that
must be communicated.

\paragraph{Dense AdamW update.}
AdamW maintains first- and second-moment estimates $(m_t, v_t)$ and applies decoupled weight decay.
Given the aggregated gradient $\bar{g}_t$, the moment updates are
\begin{align}
m_t &= \beta_1 m_{t-1} + (1-\beta_1)\,\bar{g}_t, \label{eq:adam_m_dense}\\
v_t &= \beta_2 v_{t-1} + (1-\beta_2)\,\big(\bar{g}_t \circ \bar{g}_t\big), \label{eq:adam_v_dense}
\end{align}
followed by bias correction $\hat{m}_t=m_t/(1-\beta_1^t)$ and $\hat{v}_t=v_t/(1-\beta_2^t)$.
The parameter update is
\begin{equation}
w_{t+1}
\;=\;
w_t - \eta \left( \frac{\hat{m}_t}{\sqrt{\hat{v}_t}+\epsilon} + \lambda\,w_t \right).
\label{eq:adamw_update_dense}
\end{equation}
In distributed data parallelism, the \emph{communication-critical} step is the construction of $\bar{g}_t$ in
\eqref{eq:dp_grad_avg}; once $\bar{g}_t$ is available, all remaining computations in
\eqref{eq:adam_m_dense}--\eqref{eq:adamw_update_dense} are local.

\paragraph{Matrix-shaped parameters.}
Our method targets matrix-shaped parameter blocks, which constitute the dominant portion of trainable weights
in transformers (e.g., linear and embedding matrices).
For a matrix parameter block $W^{(\ell)}\in\mathbb{R}^{m_\ell\times n_\ell}$, let
\begin{equation*}
G_{t,i}^{(\ell)} \;:=\; \nabla_{W^{(\ell)}} f_i(w_t) \in \mathbb{R}^{m_\ell\times n_\ell}
\end{equation*}
denote worker $i$'s local gradient, and
\begin{equation*}
\bar{G}_t^{(\ell)} \;=\; \frac{1}{N}\sum_{i=1}^{N} G_{t,i}^{(\ell)}
\end{equation*}
the synchronized gradient used by dense AdamW.
Dense synchronization communicates an $m_\ell\times n_\ell$ tensor per step for each matrix block $\ell$,
which scales with the full parameter size and can become the dominant bottleneck when interconnect bandwidth
is limited.

\subsection{Problem Formulation}
\label{subsec:formulation}

We consider data-parallel training with $N$ workers and focus on matrix-shaped parameter blocks
$\{W^{(\ell)}\}_{\ell\in\mathcal{L}_{\text{mat}}}$.
At step $t$, each worker $i$ computes a local matrix gradient $G_{t,i}^{(\ell)}$ and dense synchronization
forms $\bar{G}_t^{(\ell)}=\frac{1}{N}\sum_{i=1}^{N} G_{t,i}^{(\ell)}$, which is then used by AdamW.
From a communication perspective, the key question is: \emph{what tensors must be synchronized across workers at each step}?

For a given method, let $\mathcal{S}_t^{(\ell)}$ denote the set of tensors synchronized for layer $\ell$ at step $t$.
For dense AdamW, $\mathcal{S}_t^{(\ell)}=\{\bar{G}_t^{(\ell)}\}$.
Low-rank communication replaces $\bar{G}_t^{(\ell)}$ by compact representations, and may additionally synchronize
auxiliary tensors on subspace-refresh steps.

Let $b_{\text{dtype}}$ be the number of bytes per element of the communicated dtype (e.g., 2 for bf16/fp16, 4 for fp32),
and let $|\mathcal{S}_t^{(\ell)}|$ denote the total number of scalar entries across all tensors in the set $\mathcal{S}_t^{(\ell)}$.
We define the step-wise communicated bytes at step $t$ as
\begin{equation*}
\mathcal{B}_t \;:=\; \sum_{\ell\in\mathcal{L}_{\text{mat}}} b_{\text{dtype}}\cdot\big|\mathcal{S}_t^{(\ell)}\big|.
\end{equation*}

Over $T$ optimization steps, we further report the average communicated bytes per step,
$\mathrm{Bytes/Step}:=\frac{1}{T}\sum_{t=1}^{T} \mathcal{B}_t$, the peak communicated bytes per step,
$\mathrm{PeakBytes}:=\max_{1\le t\le T} \mathcal{B}_t$, and the cumulative communicated bytes up to step $t$,
$\mathrm{CumulativeBytes}(t):=\sum_{\tau=1}^{t} \mathcal{B}_\tau$.

\newcommand{\algc}[1]{\hfill{\footnotesize$\triangleright$ #1}}

\begin{algorithm}[tb]
\caption{TSR-Adam for a matrix block $W^{(\ell)}$ (use $(r_{\text{emb}},K_{\text{emb}})$ for embeddings; $(r,K)$ otherwise)}
\label{alg:tsr-adam}
\centering
\begin{minipage}{0.97\columnwidth}
\small
\begin{algorithmic}
\STATE {\bfseries Input:} rank $r_\ell$, refresh interval $K_\ell$, oversampling $p$,
AdamW hyperparams $(\eta,\lambda,\beta_1,\beta_2,\epsilon)$
\STATE {\bfseries State:} bases $(U,V)$, core moments $(m,v)$, step $t\leftarrow 1$
\STATE Set $r\leftarrow r_\ell$, $K\leftarrow K_\ell$, $k\leftarrow r+p$.
Initialize $m\leftarrow 0$, $v\leftarrow 0$, and $(U,V)$ by one refresh.
Let $\AR(\cdot)$ denote all-reduce averaging across workers.
\STATE (Algorithm is for a fixed layer $\ell$; we omit $(\ell)$ in variables for brevity.)

\REPEAT
  \STATE Each worker $i$ computes local gradient $G_{t,i}$

  \IF{$t\bmod K = 0$}
    \STATE Sample shared $\Omega$ (shared RNG seed)
    \STATE $Y_{t,i}\leftarrow G_{t,i}\Omega$
    \STATE $Q_{t,i} \leftarrow \mathrm{orth}(Y_{t,i})$
    , \quad  $Y^{\mathrm{row}}_{t,i}\leftarrow G_{t,i}^\top Q_{t,i}$
    \STATE   $Q^{\mathrm{row}}_{t,i} \leftarrow \mathrm{orth}(Y^{\mathrm{row}}_{t,i})$
    \STATE $Y_{t,i}\leftarrow G_{t,i}Q^{\mathrm{row}}_{t,i}$
    \STATE $Q_{t,i} \leftarrow \mathrm{orth}(Y_{t,i})$
    \STATE $B_{t,i}\leftarrow Q_{t,i}^\top G_{t,i},\quad \bar{B}_t \leftarrow \AR(B_{t,i})$ \algc{$\bar{B}_t$}
    \STATE $\bar{Q}_t \leftarrow \AR(Q_{t,i}),\quad\bar{B}_t=\widetilde{U}\Sigma\widetilde{V}^\top$ \algc{$\bar{Q}_t$}
    \STATE $U\leftarrow \bar{Q}\,\widetilde{U}_{[:,1:r]},\quad V\leftarrow \widetilde{V}_{[:,1:r]}$
  \ENDIF

  \STATE $C_{t,i}\leftarrow U^\top G_{t,i} V,\quad \bar{C}_t \leftarrow \AR(C_{t,i})$ \algc{$\bar{C}_t$}
  \STATE $m \leftarrow \beta_1 m + (1-\beta_1)\bar{C}_t$
  \STATE $v \leftarrow \beta_2 v + (1-\beta_2)\big(\bar{C}_t \circ \bar{C}_t\big)$
  \STATE $\hat{m}\leftarrow m/(1-\beta_1^{t}),\quad \hat{v}\leftarrow v/(1-\beta_2^{t})$
  \STATE $D \leftarrow \hat{m}\oslash(\sqrt{\hat{v}}+\epsilon)$
  \STATE $\Delta W_t \leftarrow U D V^\top$
  \STATE $W \leftarrow W - \eta\left(\Delta W_t + \lambda W\right)$
  \STATE $t \leftarrow t+1$
\UNTIL{convergence criteria met}
\end{algorithmic}
\end{minipage}
\end{algorithm}

\subsection{TSR-Adam: Two-Sided Low-Rank Core Synchronization}
\label{subsec:tsr}

We compress and synchronize matrix gradients through a two-sided low-rank core representation.
For each matrix-shaped parameter block $W^{(\ell)}\in\mathbb{R}^{m_\ell\times n_\ell}$, TSR-Adam maintains
orthonormal bases $U_t^{(\ell)}\in\mathbb{R}^{m_\ell\times r_\ell}$ and
$V_t^{(\ell)}\in\mathbb{R}^{n_\ell\times r_\ell}$, with $(U_t^{(\ell)})^\top U_t^{(\ell)}=I$ and
$(V_t^{(\ell)})^\top V_t^{(\ell)}=I$.
At step $t$, worker $i$ computes the local matrix gradient $G_{t,i}^{(\ell)}$ and forms the local core
\begin{equation*}
C_{t,i}^{(\ell)} \;:=\; (U_t^{(\ell)})^\top G_{t,i}^{(\ell)} V_t^{(\ell)} \in \mathbb{R}^{r_\ell\times r_\ell}.
\end{equation*}
We then synchronize only the core via all-reduce,
$\bar{C}_t^{(\ell)}=\frac{1}{N}\sum_{i=1}^{N} C_{t,i}^{(\ell)}$,
and reconstruct the gradient locally as
\begin{equation}
\widehat{G}_t^{(\ell)} \;:=\; U_t^{(\ell)}\,\bar{C}_t^{(\ell)}\,(V_t^{(\ell)})^\top .
\label{eq:recon_grad}
\end{equation}
Compared with dense synchronization of $\bar{G}_t^{(\ell)}\in\mathbb{R}^{m_\ell\times n_\ell}$, this reduces the
\emph{per-step} synchronized payload for layer $\ell$ from $O(m_\ell n_\ell)$ elements to $O(r_\ell^2)$ elements
on non-refresh steps.

\subsection{AdamW Updates in Core Space and Reconstruction}
\label{subsec:adamw_tsr}

TSR-Adam performs AdamW updates using the synchronized two-sided core as the effective gradient.
For each matrix block $W^{(\ell)}$, we maintain AdamW first- and second-moment states
$m_t^{(\ell)}, v_t^{(\ell)} \in \mathbb{R}^{r_\ell\times r_\ell}$ in the low-dimensional core space.
At step $t$, after synchronizing the core $\bar{C}_t^{(\ell)}=\frac{1}{N}\sum_{i=1}^{N}(U_t^{(\ell)})^\top G_{t,i}^{(\ell)} V_t^{(\ell)}$,
we update
\begin{align*}
m_t^{(\ell)} &= \beta_1 m_{t-1}^{(\ell)} + (1-\beta_1)\,\bar{C}_t^{(\ell)}, \\
v_t^{(\ell)} &= \beta_2 v_{t-1}^{(\ell)} + (1-\beta_2)\,\big(\bar{C}_t^{(\ell)} \circ \bar{C}_t^{(\ell)}\big), 
\end{align*}
where $\circ$ denotes the Hadamard (element-wise) product.
With bias correction $\hat{m}_t^{(\ell)}=m_t^{(\ell)}/(1-\beta_1^t)$ and $\hat{v}_t^{(\ell)}=v_t^{(\ell)}/(1-\beta_2^t)$,
the normalized update in core space is
\begin{equation*}
D_t^{(\ell)} \;=\; \hat{m}_t^{(\ell)} \oslash \big(\sqrt{\hat{v}_t^{(\ell)}}+\epsilon\big),
\end{equation*}
where $\oslash$ is element-wise division.
We then lift the update back to the original parameter space via
\begin{equation*}
\Delta W_t^{(\ell)} \;=\; U_t^{(\ell)}\, D_t^{(\ell)}\, (V_t^{(\ell)})^\top,
\end{equation*}
and apply the standard AdamW rule
\begin{equation*}
W_{t+1}^{(\ell)}
\;=\;
W_t^{(\ell)} - \eta\left(
\Delta W_t^{(\ell)} + \lambda\,W_t^{(\ell)}
\right).
\end{equation*}
All non-matrix parameters (e.g., biases and normalization parameters) are synchronized and updated in dense form.

\subsection{Randomized Refresh of Two-Sided Bases and Peak Communication}
\label{subsec:rsvd_refresh}

The bases $(U_t^{(\ell)},V_t^{(\ell)})$ must track the evolving gradient subspaces; stale bases can degrade the
quality of the reconstructed update direction in Eq.~\eqref{eq:recon_grad}.

A naive refresh strategy would first synchronize the full dense gradient $\bar{G}_t^{(\ell)}$ and then compute an exact SVD,
incurring both high compute cost and a large \emph{peak} communication payload on refresh steps.
Instead, we refresh $(U_t^{(\ell)},V_t^{(\ell)})$ using a sketch-based randomized SVD (rSVD) procedure that avoids
full-gradient synchronization and only requires communicating low-dimensional sketches.

Let $k_\ell=r_\ell+p$ with a small oversampling $p$.
On refresh steps (every $K_\ell$ steps), we draw a shared Gaussian matrix
$\Omega^{(\ell)}\in\mathbb{R}^{n_\ell\times k_\ell}$ (e.g., via RNG seed).
Each worker $i$ forms a range sketch
$Y_{t,i}^{(\ell)} = G_{t,i}^{(\ell)}\Omega^{(\ell)}$ then computes an orthonormal basis $Q_{t,i}^{(\ell)}=\mathrm{orth}(Y_{t,i}^{(\ell)})$ (implemented by thin QR).

To improve the approximation when the spectrum decays slowly, we optionally apply $q$ steps of power iteration.
In Algorithm~\ref{alg:tsr-adam} we show the case $q=1$ for concreteness; larger $q$ follows by repeating the alternating multiplications.
Concretely, for $j=1,\ldots,q$, we compute
\begin{align*}
{Y}_{t,i}^{\mathrm{row},(\ell)} \;=\; \! (G_{t,i}^{(\ell)})^\top Q_{t,i}^{(\ell)},
\quad
{Q}_{t,i}^{\mathrm{row},(\ell)}=\mathrm{orth}\!\left({Y}_{t,i}^{\mathrm{row},(\ell)}\right), \\
{Y}_{t,i}^{(\ell)} \;=\; \!G_{t,i}^{(\ell)} {Q}_{t,i}^{\mathrm{row},(\ell)},
\quad
{Q}_{t,i}^{(\ell)}=\mathrm{orth}\!\left({Y}_{t,i}^{(\ell)}\right),
\end{align*}
which matches the standard alternating multiplications by $G^{(\ell)}$ and $(G^{(\ell)})^\top$ in randomized SVD.

Given the final $Q_{t,i}^{(\ell)}$, we form the reduced matrix

\begin{gather*}
\bar{B}_t^{(\ell)} \;:=\; (Q_t^{(\ell)})^\top \bar{G}_t^{(\ell)}
\;=\; \AR\!\left( (Q_{t,i}^{(\ell)})^\top G_{t,i}^{(\ell)} \right), \\
\bar{Q}_t^{(\ell)} \;=\; \AR\! \left(Q_{t,i}^{(\ell)}\right)   
\end{gather*}

and compute a small SVD $\bar{B}_t^{(\ell)} = \widetilde{U}\Sigma \widetilde{V}^\top$.
We then refresh the two-sided bases as
\begin{equation*}
U_t^{(\ell)} \leftarrow \bar{Q}_t^{(\ell)}\,\widetilde{U}_{[:,1:r_\ell]},
\qquad
V_t^{(\ell)} \leftarrow \widetilde{V}_{[:,1:r_\ell]} .
\end{equation*}

Accordingly, on non-refresh steps TSR-Adam synchronizes only $\mathcal{S}_t^{(\ell)}=\{\bar{C}_t^{(\ell)}\}$.
On refresh steps, $\mathcal{S}_t^{(\ell)}$ additionally includes the communicated sketches
$\bar{Q}_t^{(\ell)}$ and $\bar{B}_t^{(\ell)}$,
which determines the peak communicated bytes.

Importantly, refresh communicates only sketches of size $m_\ell k_\ell$ or $n_\ell k_\ell$ (repeated a small number
of times when $q>0$), rather than the full $m_\ell n_\ell$ gradient, thereby substantially reducing the peak communicated
bytes per step captured by $\mathrm{PeakBytes}$.

\begin{table}[t]
  \caption{Main Results Summary. We report final loss, bytes per step, peak communication volume, memory, and the average update time per subspace refresh interval (including relative communication reduction factors).}
  \label{tab:main}
  \begin{center}
  \setlength{\tabcolsep}{2.3pt}
    \begin{small}
      \begin{sc}
        \begin{tabular}{lcccccccc}
          \toprule
          Scale & Method & Rank & $K$ & Final loss$\downarrow$ & Bytes/step$\downarrow$ & Peak Bytes &  Memory& Update Time \\
          \midrule
          60M  & AdamW & 512 & -- & 3.53 & 0.17G & 0.17G & 0.28G&0.42s \\
          60M  & GaLore & 128 & 200 & 3.55 & 0.10G & 0.14G & 0.21G&0.43s  \\
          \rowcolor{blue!10} 60M  & TSR & 256(64) & 100 & 3.61 & 0.020G & 0.10G & 0.17G&0.45s  \\
          \midrule
          130M & AdamW & 768 & -- & 3.22 & 0.44G & 0.44G &0.71G&0.41s\\
          130M & GaLore & 256 & 200 & 3.23 & 0.21G & 0.36G & 0.51G&0.59s\\
          \rowcolor{blue!10} 130M & TSR & 384(96) & 100 & 3.36 & 0.058G & 0.31G & 0.45G&0.41s\\
          \midrule
          350M & AdamW & 1024 & -- & 2.93 & 1.34G & 1.34G & 2.08G&0.50s\\
          350M & GaLore & 256 & 200 & 2.99 & 0.44G & 0.98G & 1.26G&0.83s\\
          \rowcolor{blue!10} 350M & TSR & 384(128) & 100 & 3.16 & 0.11G & 0.79G & 1.19G&0.69s\\
          \midrule
          1B   & AdamW & 2048 & -- & 2.74 & 5.09G & 5.09G &7.77G&3.19s\\
          1B   & GaLore & 512 & 200 & 2.75 & 1.48G & 3.63G & 4.5G&4.01s\\
          \rowcolor{blue!10} 1B   & TSR & 512(256) & 100 & 3.08 & 0.21G &2.05G & 3.81G&3.19s\\
          \bottomrule
        \end{tabular}
      \end{sc}
    \end{small}
  \end{center}
  \vskip -0.2in
\end{table}

\subsection{Embedding-Specific Low-Rank Communication}
\label{subsec:emb}

Embedding matrices can contribute a substantial fraction of communicated gradient volume, especially in smaller models.
We therefore apply the same two-sided core synchronization to the embedding matrix
$E\in\mathbb{R}^{|\mathcal{V}|\times d}$, but use a separate rank and refresh schedule,
$(r_{\text{emb}},K_{\text{emb}})$, from other linear layers. Concretely, we maintain
$U_t^{(\text{emb})}\in\mathbb{R}^{|\mathcal{V}|\times r_{\text{emb}}}$ and
$V_t^{(\text{emb})}\in\mathbb{R}^{d\times r_{\text{emb}}}$ and synchronize the embedding core
$C_{t,i}^{(\text{emb})}=(U_t^{(\text{emb})})^\top G_{t,i}^{(\text{emb})} V_t^{(\text{emb})}$.
This decoupling allows TSR-Adam to tailor communication and refresh costs to the distinct structure of embeddings
without constraining the choices for other matrix blocks.

\subsection{Convergence Analysis}
\label{subsec:convergence}

Providing convergence guarantees for fully adaptive Adam-family methods is notoriously delicate.
Following the common practice in low-rank/subspace optimization (e.g.,~\cite{he2024subspace,zhang2025breaking}), we therefore
establish a stationarity guarantee for a \emph{momentum} version of TSR that captures the core effect of
two-sided projection and refresh; all proofs are deferred to Appendix~\ref{app_sec:proofs}.
Concretely, we analyze the update
$\widetilde m_t := U_t m_t V_t^\top$ and $w_{t+1}=w_t-\eta\,\widetilde m_t$.

\paragraph{Assumption (Smoothness).}
The objective $f$ is $L$-smooth,
i.e., $f(y)\le f(x)+\langle \nabla f(x),y-x\rangle+\frac{L}{2}\|y-x\|^2$ for all $x,y$.

\paragraph{Assumption (Projected Stochasticity and Approximation).}
Let $P_t := U_t U_t^\top \nabla f(w_t) V_t V_t^\top$ be the projection of the true gradient onto the current subspace.
We define the subspace approximation error as $\Delta_t := \|P_t - \nabla f(w_t)\|^2$.
Furthermore, we assume the synchronized core provides an unbiased estimate of the projected gradient, i.e., $\mathbb{E}[U_t \bar{C}_t V_t^\top] = P_t$, with bounded variance $\mathbb{E}\|U_t \bar{C}_t V_t^\top - P_t\|^2 \le \sigma_t^2$.

\begin{theorem}[Stationarity bound]
\label{thm:tsr_main}
Suppose the above assumptions hold.
Set $\eta = \frac{1}{L\,T^{2/3}}$ and choose $\beta^2 = 1-\sqrt{40L\eta}=1-\frac{\sqrt{40}}{T^{1/3}}$
(so $1-\beta^2=\frac{\sqrt{40}}{T^{1/3}}$).
Define $\Delta f:=\mathbb{E}[f(w_0)-f^\star]$, let $\mathbb{I}_{\mathrm{refresh}}(t)\in\{0,1\}$ indicate whether step $t$ is a refresh step, and let
$R_t:=\left\|U_t m_{t-1}V_t^\top \allowbreak-\allowbreak U_{t-1} m_{t-1}V_{t-1}^\top\right\|^2$.
Then for any $T\ge 1$,
\begin{multline}
\frac{1}{T}\sum_{t=0}^{T-1}\mathbb{E}\|\nabla f(w_t)\|^2
\;\le\;
\frac{2L\,\Delta f}{T^{1/3}}
+\frac{4}{\sqrt{10}\,T^{2/3}}\mathbb{E}\|\widetilde m_0-\nabla f(w_0)\|^2
+\\
\frac{16}{T}\cdot
\frac{\frac{\sqrt{40}}{T^{1/3}}}{\Big(1+\sqrt{1-\frac{\sqrt{40}}{T^{1/3}}}\Big)^2}
\sum_{t=1}^{T-1}\sigma_t^2 +\frac{16}{T\Big(1+\sqrt{1-\frac{\sqrt{40}}{T^{1/3}}}\Big)^2}
\sum_{t=1}^{T-1}\Delta_t\\
+\left(\frac{4}{\sqrt{10}\,T^{2/3}}+\frac{8}{5\,T^{1/3}}\right)
\sum_{t=1}^{T-1}\mathbb{I}_{\mathrm{refresh}}(t)\,\mathbb{E}R_t.
\label{eq:main_stationarity}
\end{multline}
\end{theorem}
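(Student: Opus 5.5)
This is a standard momentum-SGD-with-bias analysis. The plan is to track the lifted momentum $\widetilde m_t=U_t m_t V_t^\top$ as an estimate of $\nabla f(w_t)$ and combine a descent inequality with a recursion for the estimation error $E_t:=\mathbb{E}\|\widetilde m_t-\nabla f(w_t)\|^2$. Here we take $m_t=\beta m_{t-1}+(1-\beta)\bar C_t$ as the core momentum (the statement's ``$\beta^2$'' is the contraction factor of $E_t$). Write $\widehat G_t=U_t\bar C_tV_t^\top$.

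\textbf{Step 1: descent lemma.} By $L$-smoothness and $w_{t+1}=w_t-\eta\widetilde m_t$,
\[
f(w_{t+1})\le f(w_t)-\tfrac{\eta}{2}\|\nabla f(w_t)\|^2-\tfrac{\eta}{2}(1-L\eta)\|\widetilde m_t\|^2+\tfrac{\eta}{2}\|\widetilde m_t-\nabla f(w_t)\|^2 ,
\]
using $-\langle a,b\rangle=\tfrac12(\|a-b\|^2-\|a\|^2-\|b\|^2)$. Since $\eta=1/(LT^{2/3})$, we have $L\eta\le 1/2$, so a term $-\tfrac{\eta}{4}\|\widetilde m_t\|^2$ is kept. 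We then sum over $t$ and divide by $\eta T/2$. This gives $2\Delta f/(\eta T)=2L\Delta f/T^{1/3}$ plus $\frac1T\sum_t E_t$, minus a negative $\|\widetilde m_t\|^2$ term that will absorb drift.

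\textbf{Step 2: error recursion.} Write $\widetilde m_t=\beta\,U_tm_{t-1}V_t^\top+(1-\beta)\widehat G_t$ and $U_tm_{t-1}V_t^\top=\widetilde m_{t-1}+\delta_t$. On non-refresh steps $\delta_t=0$; on refresh steps $\|\delta_t\|^2=R_t$. This gives
\[
\widetilde m_t-\nabla f(w_t)=\beta(\widetilde m_{t-1}-\nabla f(w_{t-1}))+\beta(\nabla f(w_{t-1})-\nabla f(w_t))+\beta\delta_t+(1-\beta)(\widehat G_t-P_t)+(1-\beta)(P_t-\nabla f(w_t)).
\]
The noise term $\widehat G_t-P_t$ has zero conditional mean, so it separates with variance $(1-\beta)^2\sigma_t^2$. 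For the remaining deterministic terms I would use Young's inequality $\|\beta a+b\|^2\le \beta^2(1+c)\|a\|^2+(1+1/c)\|b\|^2$ with $c$ tuned so that $\beta^2(1+c)$ becomes a contraction factor. Choosing $(1+c)=1/\beta$ gives a factor $\beta$ and $1+1/c=1/(1-\beta)$; with $1-\beta=\frac{1-\beta^2}{1+\beta}$ this is where the $(1+\sqrt{1-\sqrt{40}/T^{1/3}})^2$ denominators come from. The drift is bounded by $L^2\eta^2\|\widetilde m_{t-1}\|^2$.

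\textbf{Step 3: unroll and combine.} Unrolling the geometric recursion and summing gives
\[
\sum_t E_t\lesssim \frac{E_0}{1-\beta^2}+\sum_t\Big[\sigma_t^2(1-\beta)^2+\Delta_t+\tfrac{L^2\eta^2}{(1-\beta)^2}\|\widetilde m_{t-1}\|^2+\tfrac{1}{1-\beta}\mathbb{I}_{\rm refresh}(t)R_t\Big]\Big/(1-\beta^2)\cdot(\cdots).
\]
The drift coefficient is of order $L^2\eta^2/(1-\beta^2)^2$. The choice $1-\beta^2=\sqrt{40L\eta}$ is designed to make it at most $\eta/4$ relative to the kept negative term, i.e.\ $L^2\eta^2\cdot\frac{\text{const}}{40 L\eta}\le \frac14\cdot$; this is exactly where the constant $40$ enters. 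After cancellation, substituting $\eta$ and $1-\beta^2=\sqrt{40}/T^{1/3}$ converts $E_0/(T(1-\beta^2))$ into the $\frac{4}{\sqrt{10}T^{2/3}}$ term. The refresh term picks up two pieces: one through the $E_0$-type factor, and one through $1/(1-\beta)$ Young weighting, giving $\frac{8}{5T^{1/3}}$.

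\textbf{Main obstacle.} The main difficulty is the constant bookkeeping in the Young-inequality split. The split must simultaneously (i) keep a strict contraction, (ii) produce the exact $(1+\sqrt{\beta^2})^{-2}$ factors on the $\sigma_t^2$ and $\Delta_t$ terms, and (iii) leave the drift coefficient small enough to be absorbed by $-\tfrac{\eta}{4}\|\widetilde m_t\|^2$. I would first run the argument with generic constants to confirm the rates, and then tune $c$ to match the stated coefficients.
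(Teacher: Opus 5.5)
Your proposal is correct, but its one-step bound is organized differently from the paper's and gives a sharper result. The overall skeleton is the same: the descent inequality with the negative $\|\widetilde m_t\|^2$ term, a contraction recursion for $E_t$, geometric summation, and absorption of the drift $L^2\eta^2\|\widetilde m_{t-1}\|^2$.

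\textbf{How the paper does it.} It uses three separate Young splits.
In Part~1 it expands $\|\beta A_t+(1-\beta)B_t\|^2$ and treats only the bias cross term, with $\alpha=\frac{1+\beta}{2\beta}$. This gives contraction $\frac{1+\beta^2}{2}$, weight $2(1-\beta)^2$ on $\sigma_t^2$, and weight $\frac{2(1-\beta)}{1+\beta}$ on $\Delta_t$.
In Part~2 it shifts $\nabla f(w_t)\to\nabla f(w_{t-1})$ with $\gamma=\frac{1-\beta^2}{4}$.
In Part~3 it peels off the basis change with $\gamma=\frac{1-\beta^2}{8}$. The result is contraction $1-\rho$ with $\rho=\frac{1-\beta^2}{8}$, and refresh weight $1+\frac{8}{1-\beta^2}$.

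These choices produce the printed constants exactly:
$\frac1\rho$ gives $\frac{4}{\sqrt{10}\,T^{2/3}}$;
$\frac1\rho(1+\frac{8}{1-\beta^2})$ gives the two refresh pieces;
$\frac1\rho\cdot 2(1-\beta)^2$ and $\frac1\rho\cdot\frac{2(1-\beta)}{1+\beta}$ give the $(1+\beta)^{-2}$ factors;
and $40=8\cdot 5$ is what makes $\frac{8(5-\beta^2)L^2}{(1-\beta^2)^2}\le\frac{L}{\eta}$.

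\textbf{How your route compares.} Your single split with $1+c=1/\beta$, plus exact orthogonal removal of the noise, contracts at rate $1-\beta\ge\frac{1-\beta^2}{2}=4\rho$. The per-step weights are $\frac{3\beta^2L^2\eta^2}{1-\beta}$ on the drift, $\frac{3\beta^2}{1-\beta}$ on $R_t$, $3(1-\beta)$ on $\Delta_t$, and $(1-\beta)^2$ on $\sigma_t^2$. After unrolling, the coefficients on $E_0$, $\sum\sigma_t^2$, $\sum\Delta_t$ and the refresh sum are $\frac{1}{1-\beta}\le\frac{2}{1-\beta^2}$, $1-\beta$, $3$, and $\frac{3\beta^2}{(1-\beta)^2}\le\frac{12}{(1-\beta^2)^2}$. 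Each is dominated by the corresponding stated coefficient. Absorption only needs $\frac{3\beta^2L^2\eta^2}{(1-\beta)^2}\le 1-L\eta$, which holds because $(1-\beta)^2\ge\frac{(1-\beta^2)^2}{4}=10L\eta$. So the theorem follows by termwise comparison.

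Your route is shorter and would even work under the weaker coupling $1-\beta\ge\sqrt6\,L\eta$. The paper's modular route instead gives a clean no-refresh recursion to which the refresh mismatch is appended, and it reproduces the printed constants literally.

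\textbf{What to fix in the write-up.}
\begin{itemize}
\item Drop the plan to tune $c$ to reproduce the stated coefficients, and drop the attributions in Step~3. Your split yields neither the $(1+\beta)^{-2}$ factors nor the $\frac{8}{5T^{1/3}}$ term, and the constant $40$ is not intrinsic to it. What you need is the domination argument above.
\item Your Step~3 display mixes rates. You unroll at rate $1-\beta$, not $1-\beta^2$. The $\Delta_t$ term must also keep the factor $(1-\beta)$ it inherits from $(1-\beta)(P_t-\nabla f(w_t))$ after the split. That factor, cancelling the $1/(1-\beta)$ from unrolling, is what keeps the $\Delta$-coefficient $O(1)$ rather than of order $T^{1/3}$.
\end{itemize}

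\textbf{A caveat shared with the paper.} Discarding the noise cross terms via conditioning on $\mathcal F_{t-1}$ requires $U_t,V_t$ (hence $\delta_t$ and $P_t$) to be $\mathcal F_{t-1}$-measurable. On refresh steps, Algorithm~1 builds them from the current $G_{t,i}$. The paper's Part~3 reuses the no-refresh inequality at refresh steps without addressing this, so your argument is no weaker there.
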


\begin{remark}[Interpretation of Convergence Terms]
The bound effectively decomposes into three parts:
\begin{itemize}
    \item \textbf{Stochastic Error:} Depends on the averaged projected variance $\bar{\sigma}^2:=\frac{1}{T}\sum_{t}\sigma_t^2$, damped by a factor proportional to $1-\beta$, yielding an overall $\tilde{O}(T^{-1/3})\cdot \bar{\sigma}^2$ contribution.
    \item \textbf{Approximation Floor:} Scales with $\bar{\Delta}:=\frac{1}{T}\sum_{t}\Delta_t$, forming an irreducible error floor. Notably, since gradients in large-scale training typically exhibit a low intrinsic dimension, $\bar{\Delta}$ remains naturally small, validating the effectiveness of our approach. However, if rank $r$ is chosen too small, $\bar{\Delta}$ will increase significantly, dominating the bound and leading to divergence.
    \item \textbf{Refresh Trade-off:} The refresh-mismatch term aggregates $\sum \mathbb{I}_{\mathrm{refresh}}(t)\mathbb{E}R_t$. This reflects a practical trade-off: larger $K$ reduces the number of refresh steps (lowering the sum count), whereas overly large $K$ may increase the subspace error $\Delta_t$ via staleness.
\end{itemize}
\end{remark}

\section{Experiments}
\label{sec:experiments}

\subsection{Experimental Setup}
\label{subsec:exp_setup}

We evaluate TSR on both pre-training and fine-tuning of LLMs. All experiments run on NVIDIA A100 GPUs and NVIDIA L40S GPUs.

\begin{figure}[t]
  \centering
  \begin{minipage}{0.32\columnwidth}
    \centering
    \includegraphics[width=\linewidth]{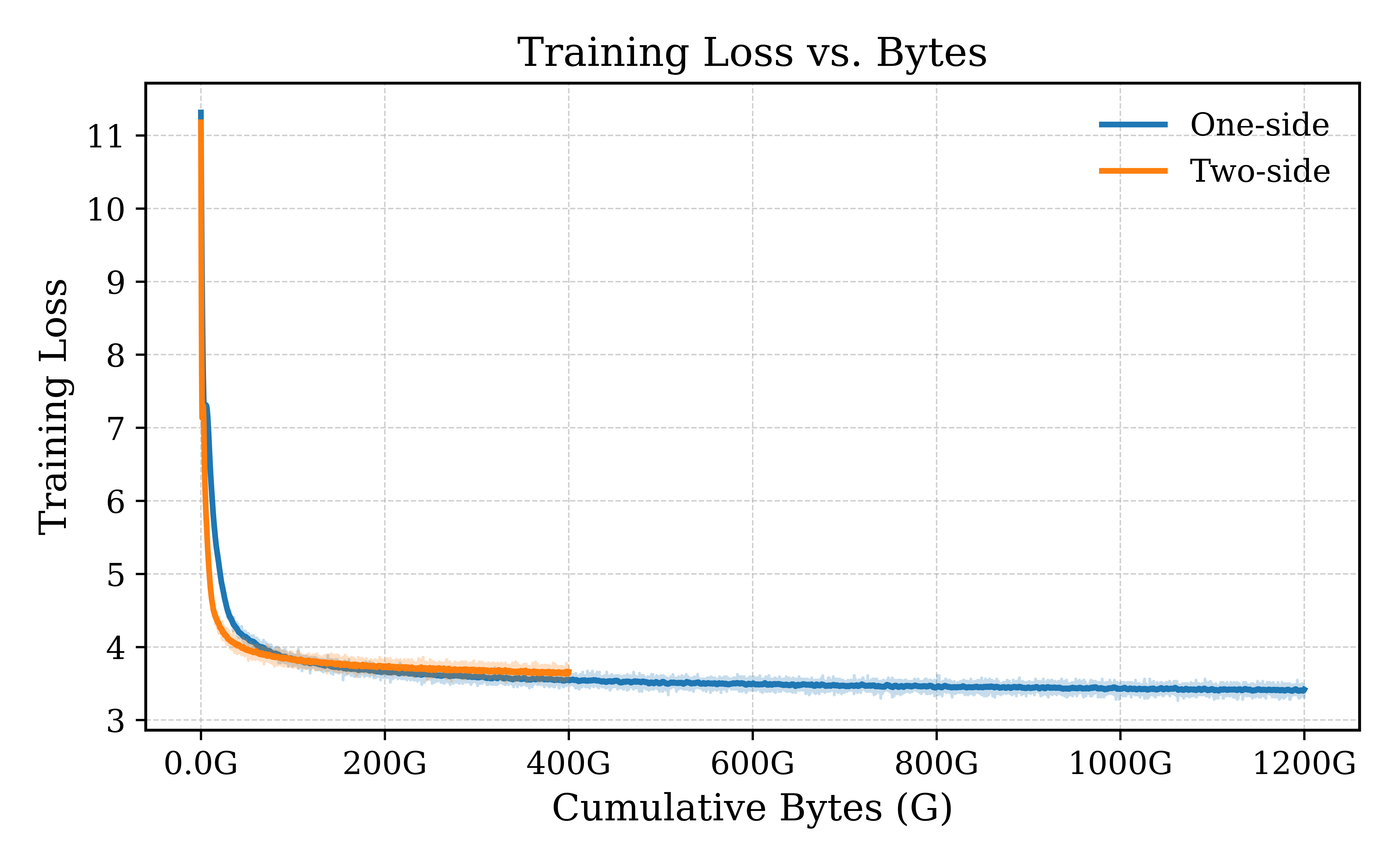}\\
    (a)
  \end{minipage}
  \hfill
  \begin{minipage}{0.32\columnwidth}
    \centering
    \includegraphics[width=\linewidth]{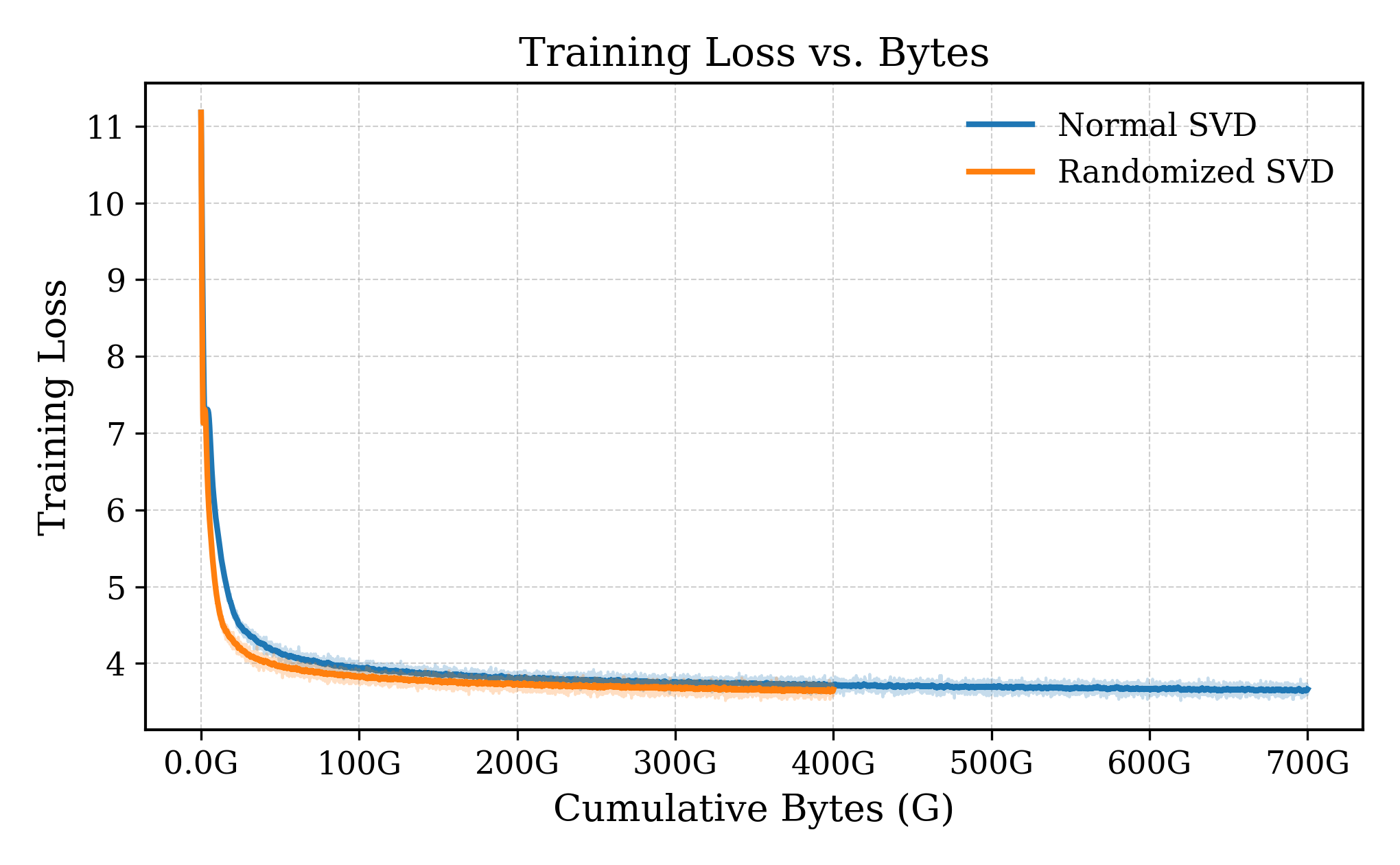}\\
    (b)
  \end{minipage}
  \hfill
  \begin{minipage}{0.32\columnwidth}
    \centering
    \includegraphics[width=\linewidth]{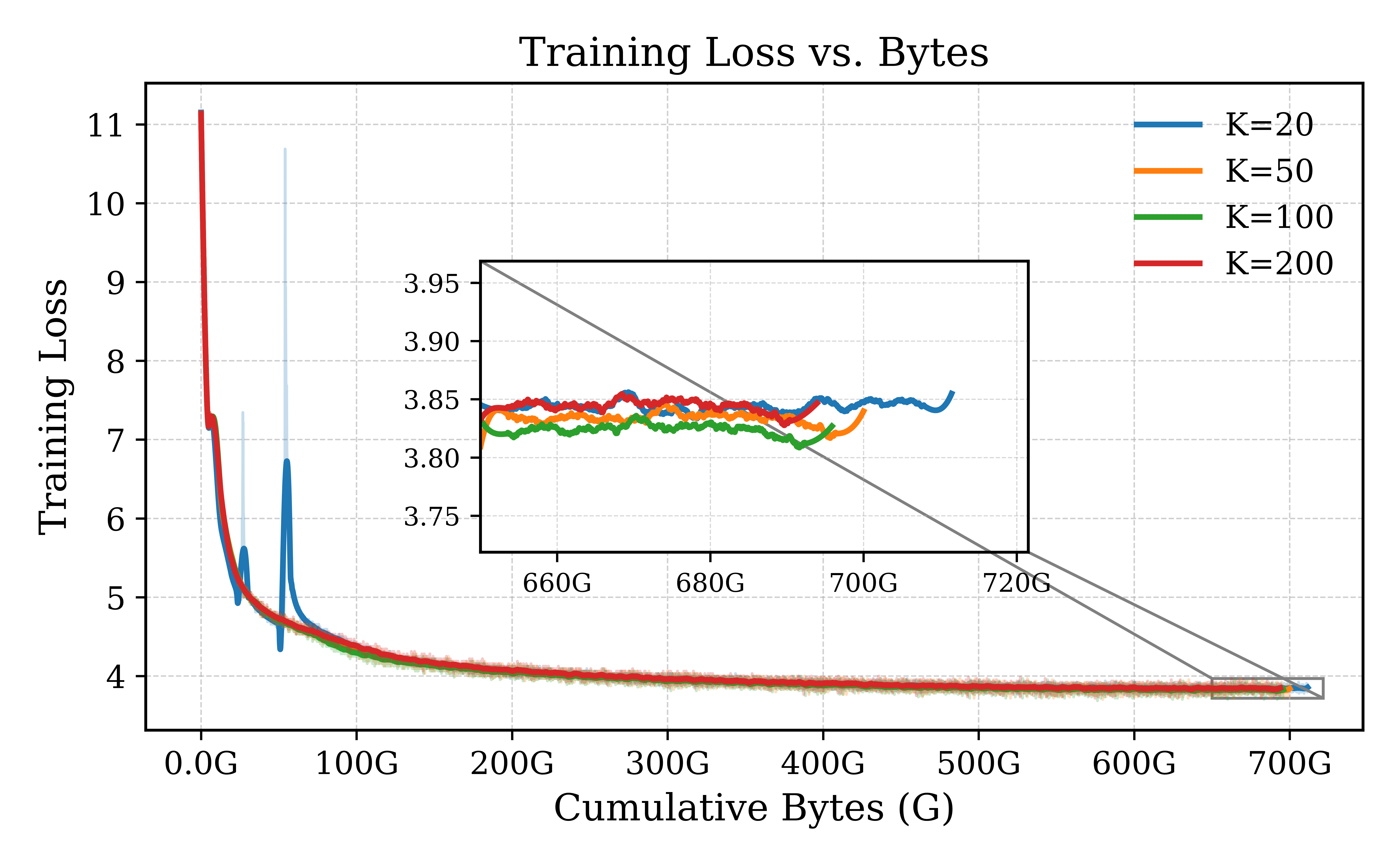}\\
    (c)
  \end{minipage}

  \caption{
    \textbf{Ablations.}
    We isolate the effects of (a) one-sided vs two-sided compression,
    (b) randomized SVD-style refresh,
    and (c) subspace refresh interval $K$ on the loss--communication trade-off.
  }
  \label{fig:ablation}
\end{figure}

\textbf{Pre-training on C4.} To evaluate its performance, we apply TSR to train LLaMA-based large language models~\cite{touvron2023llama} of four different scales—60M, 130M, 350M, and 1B—on the C4 dataset~\cite{raffel2020exploring}. The C4 dataset is a large, cleaned version of the Common Crawl web corpus, primarily designed for pre-training language models and learning word representations.

\textbf{Fine-tuning on GLUE tasks.} GLUE~\cite{wang2018glue} is a benchmark for
evaluating the performance of NLP models on a variety
of tasks, including sentiment analysis, question answering,
and textual entailment. We use GLUE
tasks to benchmark TSR against Adam and GaLore for communication
efficient fine-tuning.

\subsection{Main Results: Pretraining Communication Efficiency}
\label{subsec:pretrain_main}

\textbf{TSR improves the loss--communication trade-off across model scales.}
Figure~\ref{fig:pareto} and Table~\ref{tab:main} show that TSR-Adam achieves pretraining loss comparable to dense AdamW and GaLore on 60M/130M/350M models.
For LLaMA-1B, we report a representative run under limited training budget, and TSR still exhibits a highly competitive efficiency profile.
Across all scales, TSR substantially reduces communication and peak traffic: compared to AdamW, it cuts Bytes/Step by up to $24\times$, reduces PeakBytes by up to 59.72\%, and lowers the memory footprint by up to 50\%; averaged over scales, it achieves a $13\times$ reduction in Bytes/Step with consistent PeakBytes and memory savings.
In addition, TSR’s per-step update time is lower than GaLore and remains comparable to dense AdamW.

\textbf{TSR is more bytes-to-loss efficient under a fixed communication budget.}
On LLaMA-60M, we plot training loss against cumulative communicated bytes in Figure~\ref{fig:bytes_to_loss}.
TSR reaches comparable loss with markedly fewer communicated bytes, suggesting improved end-to-end communication efficiency beyond per-step compression. More results can be found in the Appendix~\ref{app:exp}.

\begin{figure}[t]
  \begin{center}
    \centerline{\includegraphics[width=0.7\columnwidth]{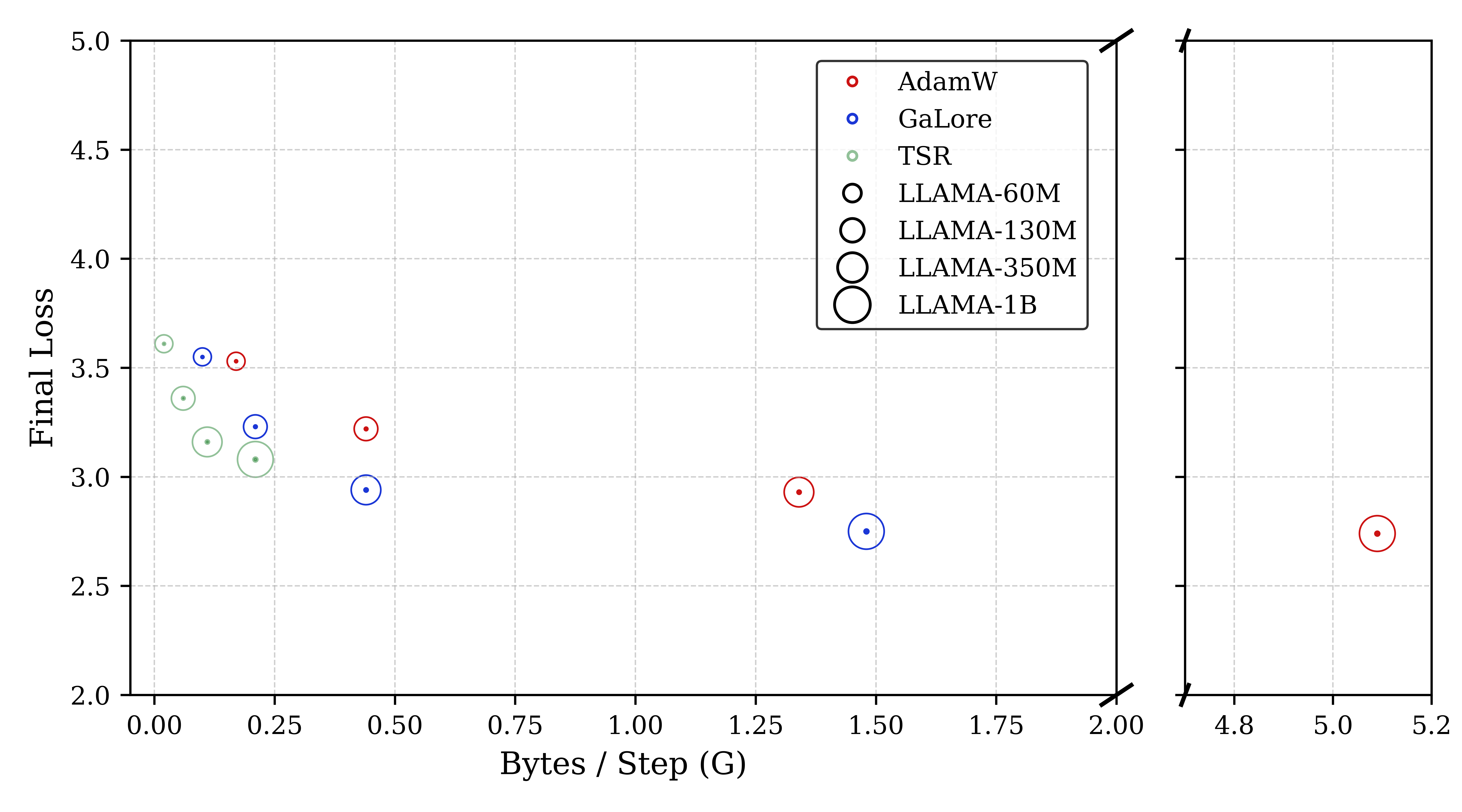}}
    \caption{
\textbf{Loss--communication Pareto frontiers across model scales.}
Final pretraining loss versus communicated Bytes/Step for 60M/130M/350M/1B models.
TSR shifts the frontier toward lower communication for competitive loss, relative to AdamW and GaLore.
}
    \label{fig:pareto}
  \end{center}
  \vskip -0.2in
\end{figure}

\textbf{Embedding gradients are a non-trivial communication source, and compressing them matters.}
Embedding layers are often overlooked in low-rank optimization but can dominate communication in smaller models.
Figure~\ref{fig:embedding}(a) shows that embeddings account for a significant fraction of gradient traffic.
Applying TSR to embeddings with embedding-specific $(r_{\text{emb}},K_{\text{emb}})$ further reduces total communication overhead while maintaining competitive pretraining loss (Figure~\ref{fig:embedding}(b)),
supporting a holistic compression strategy that covers both linear and embedding matrices.

\subsection{Ablations: Two-Sidedness, Randomized SVD, and Subspace Refresh Interval}
\label{subsec:abaltion}

We conduct ablation studies on the LLaMA-60M model to investigate the effects of two-sidedness, randomized SVD, and the subspace refresh interval on training convergence. The results are presented in Figure~\ref{fig:ablation}.

\textbf{One-Side vs. Two-Side.} The results indicate that the two-sided approach achieves comparable convergence to the one-sided variant while reducing the total communication volume by approximately two-thirds. This indicates that two-sided compression can substantially decrease communication costs at the expense of only a very small increase in final loss.

\textbf{Normal SVD vs. Randomized SVD.} The results indicate that using randomized SVD has little impact on convergence performance. Randomized SVD achieves a final loss comparable to that of standard SVD while using nearly half of the total communication volume. Moreover, our previous experiments indicate that, for models of the same scale, randomized SVD reduces the time required for subspace refreshes by approximately half.

\textbf{Subspace Refresh Interval.} The results indicate that a subspace refresh interval of 100 steps achieves the best convergence, while intervals of 50 or 200 steps incur degraded performance. Notably, excessively frequent refreshes, such as every 20 steps in our experiments, result in a pronounced increase in communication overhead.

\begin{table*}[th]
  \caption{Fine-tuning results on GLUE benchmark using pre-trained RoBERTa-Base.We report task metrics and bytes/step.}
  \label{tab:finetune}
  \begin{center}
  \setlength{\tabcolsep}{3pt}
    \begin{small}
      \begin{sc}
        \begin{tabular}{lcccccccccc}
          \toprule 
           & Bytes/Step&CoLA&STS-B & MRPC &RTE&SST2 &MNLI& QNLI&QQP&Avg \\
          \midrule
          Adam & 494M&62.24&90.92 & 91.30&79.42&  94.57&87.18 & 92.33& 92.28&86.28\\
          GaLore &158M &59.33&90.82 &90.27&77.98&93.92&87.11&92.42&89.54&85.17\\
          TSR & 20M &61.32&90.72&90.38&77.98&93.92&86.88&92.95&90.16&85.54 \\
          \bottomrule
        \end{tabular}
      \end{sc}
    \end{small}
  \end{center}
  \vskip -0.1in
\end{table*}

\subsection{Fine-tuning Results}
\label{subsec:finetune}

We fine-tuned pre-trained RoBERTa-Base~\cite{liu2019roberta} model on the GLUE tasks, and the results are shown in Table~\ref{tab:finetune}. Our method achieves better or comparable performance to GaLore on 6 of the 8 tasks, while requiring only one-eighth of the bytes per step. Compared to Adam, our method reduces the bytes per step by approximately 25×, while incurring only a 0.86\% drop in average task performance. See the Appendix~\ref{app:exp} for parameter settings and convergence graphs for each dataset.

\begin{figure}[ht]
  \centering
  \begin{minipage}{0.48\columnwidth}
    \centering
    \includegraphics[width=\linewidth]{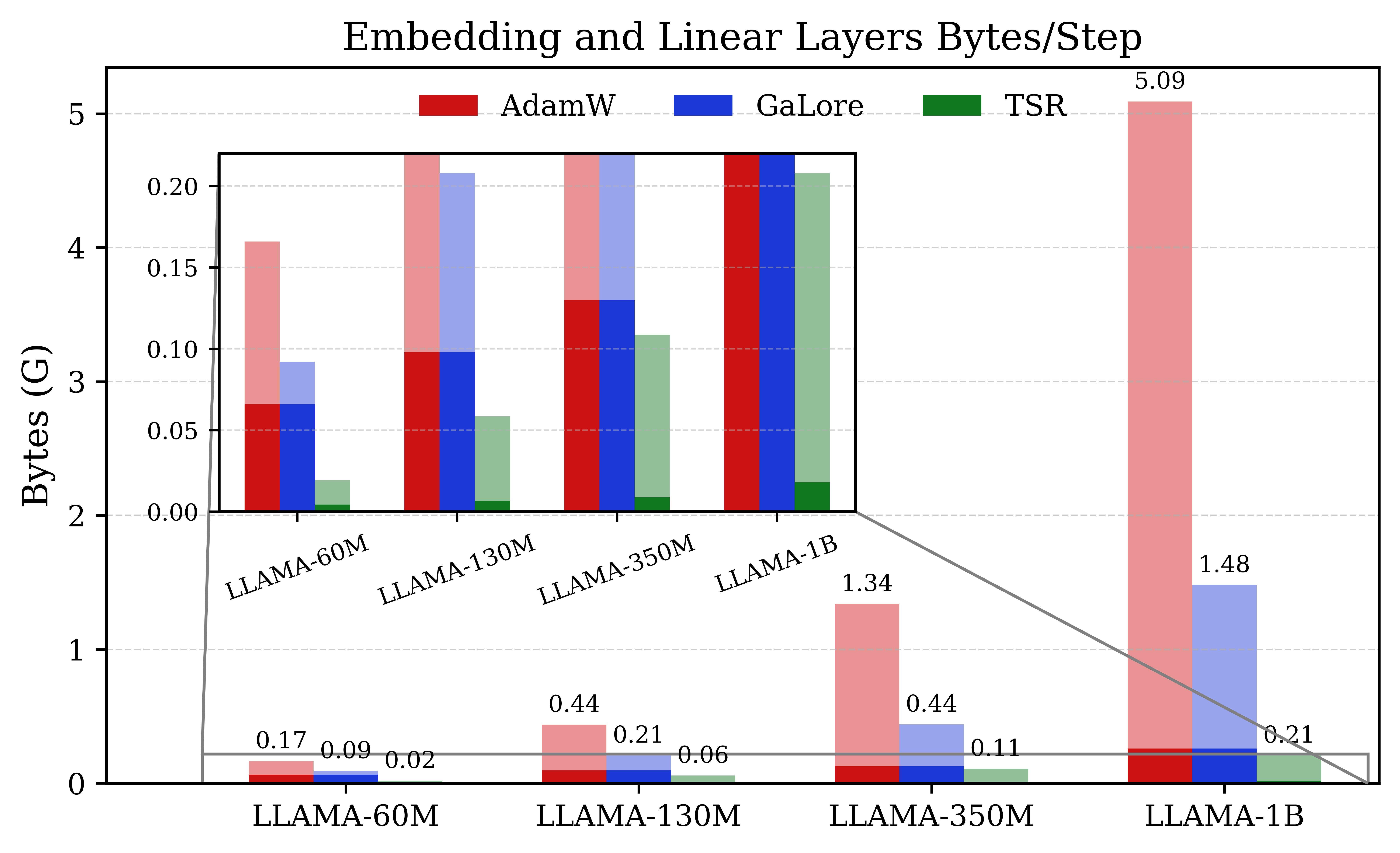} 
    \small (a)
  \end{minipage}
  \hfill
  \begin{minipage}{0.48\columnwidth}
    \centering
    \includegraphics[width=\linewidth]{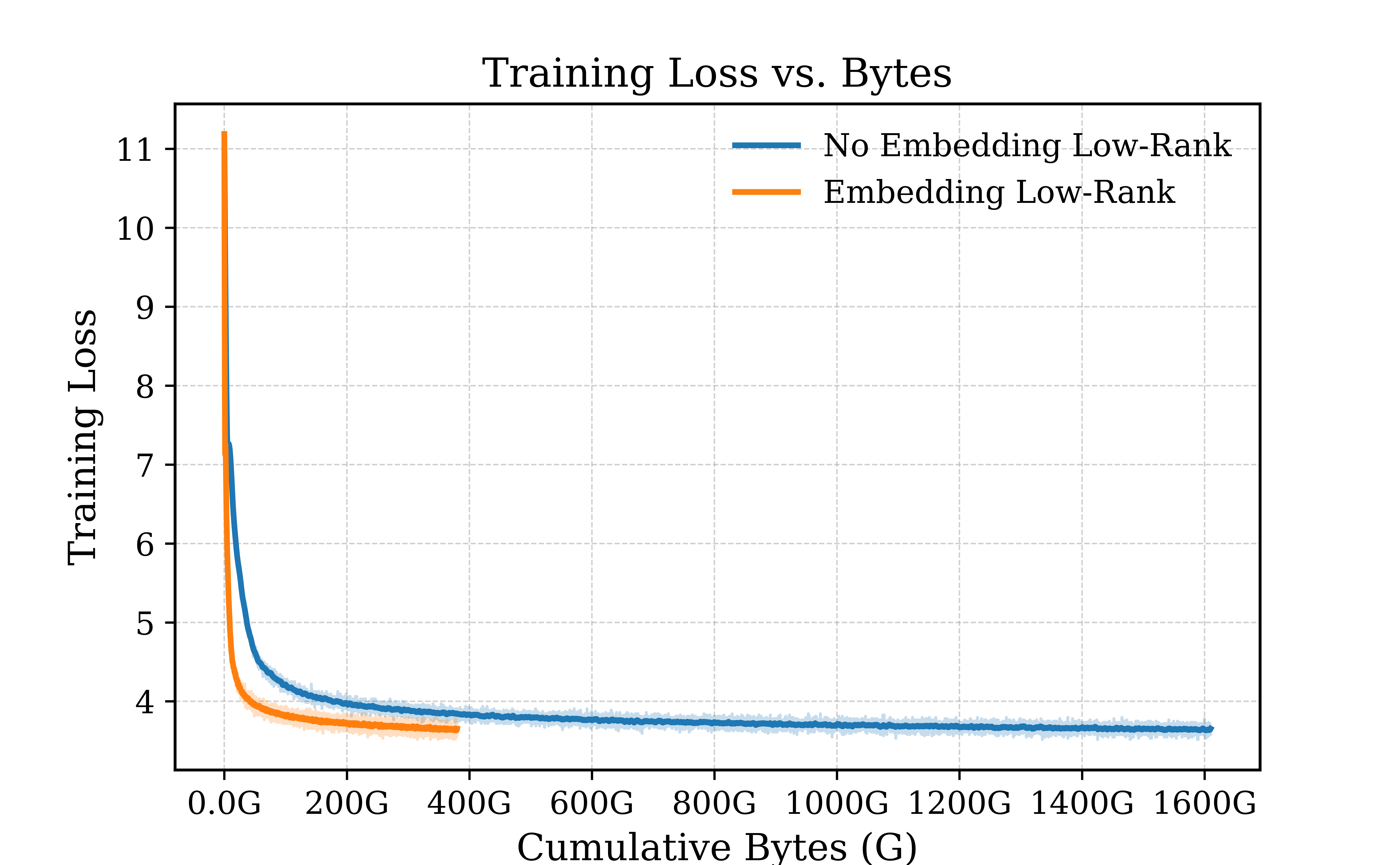} 
    \small (b)
  \end{minipage}

  \caption{
    \textbf{Embedding matters.} 
    (a) Breakdown of bytes per step for the embedding and linear layers across different model sizes. 
    (b) Loss–Bytes curves comparing the use of low-rank compression on the embedding layer.
  }
  \label{fig:embedding}
\end{figure}

\section{Conclusion}
\label{sec:conclusion}

We propose TSR, a two-sided low-rank communication mechanism for distributed training of large language models.
By synchronizing only a compact core $U^\top G V\in\mathbb{R}^{r\times r}$, TSR reduces the dominant per-step communication
from dense $O(mn)$  to $O(r^2)$, while keeping optimizer states in the same low-dimensional core space.
To control refresh-induced peak traffic, TSR adopts a randomized SVD-based refresh that communicates only low-dimensional sketches.
We further extend the same principle to embedding gradients via embedding-specific ranks and refresh schedules.
Empirically, TSR substantially improves bytes-to-loss efficiency in pretraining and fine-tuning.

\bibliography{example_paper}

@inproceedings{tang20211,
  title={1-bit adam: Communication efficient large-scale training with adam’s convergence speed},
  author={Tang, Hanlin and Gan, Shaoduo and Awan, Ammar Ahmad and Rajbhandari, Samyam and Li, Conglong and Lian, Xiangru and Liu, Ji and Zhang, Ce and He, Yuxiong},
  booktitle={International Conference on Machine Learning},
  pages={10118--10129},
  year={2021},
  organization={PMLR}
}

@inproceedings{narayanan2021efficient,
  title={Efficient large-scale language model training on gpu clusters using megatron-lm},
  author={Narayanan, Deepak and Shoeybi, Mohammad and Casper, Jared and LeGresley, Patrick and Patwary, Mostofa and Korthikanti, Vijay and Vainbrand, Dmitri and Kashinkunti, Prethvi and Bernauer, Julie and Catanzaro, Bryan and others},
  booktitle={Proceedings of the international conference for high performance computing, networking, storage and analysis},
  pages={1--15},
  year={2021}
}

@inproceedings{rajbhandari2020zero,
  title={Zero: Memory optimizations toward training trillion parameter models},
  author={Rajbhandari, Samyam and Rasley, Jeff and Ruwase, Olatunji and He, Yuxiong},
  booktitle={SC20: International Conference for High Performance Computing, Networking, Storage and Analysis},
  pages={1--16},
  year={2020},
  organization={IEEE}
}

@article{alistarh2017qsgd,
  title={QSGD: Communication-efficient SGD via gradient quantization and encoding},
  author={Alistarh, Dan and Grubic, Demjan and Li, Jerry and Tomioka, Ryota and Vojnovic, Milan},
  journal={Advances in neural information processing systems},
  volume={30},
  year={2017}
}

@article{dettmers20218,
  title={8-bit optimizers via block-wise quantization},
  author={Dettmers, Tim and Lewis, Mike and Shleifer, Sam and Zettlemoyer, Luke},
  journal={arXiv preprint arXiv:2110.02861},
  year={2021}
}

@article{lin2017deep,
  title={Deep gradient compression: Reducing the communication bandwidth for distributed training},
  author={Lin, Yujun and Han, Song and Mao, Huizi and Wang, Yu and Dally, William J},
  journal={arXiv preprint arXiv:1712.01887},
  year={2017}
}

@article{vogels2019powersgd,
  title={PowerSGD: Practical low-rank gradient compression for distributed optimization},
  author={Vogels, Thijs and Karimireddy, Sai Praneeth and Jaggi, Martin},
  journal={Advances in Neural Information Processing Systems},
  volume={32},
  year={2019}
}

@inproceedings{mu2022hylo,
  title={HyLo: a hybrid low-rank natural gradient descent method},
  author={Mu, Baorun and Soori, Saeed and Can, Bugra and G{\"u}rb{\"u}zbalaban, Mert and Dehnavi, Maryam Mehri},
  booktitle={SC22: International Conference for High Performance Computing, Networking, Storage and Analysis},
  pages={1--16},
  year={2022},
  organization={IEEE}
}

@article{zhao2024galore,
  title={Galore: Memory-efficient llm training by gradient low-rank projection},
  author={Zhao, Jiawei and Zhang, Zhenyu and Chen, Beidi and Wang, Zhangyang and Anandkumar, Anima and Tian, Yuandong},
  journal={arXiv preprint arXiv:2403.03507},
  year={2024}
}

@article{su2025galore,
  title={Galore 2: Large-scale llm pre-training by gradient low-rank projection},
  author={Su, DiJia and Gu, Andrew and Xu, Jane and Tian, Yuandong and Zhao, Jiawei},
  journal={arXiv preprint arXiv:2504.20437},
  year={2025}
}

@inproceedings{jaiswallow,
  title={From Low Rank Gradient Subspace Stabilization to Low-Rank Weights: Observations, Theories, and Applications},
  author={JAISWAL, AJAY KUMAR and Wang, Yifan and Yin, Lu and Liu, Shiwei and Chen, Runjin and Zhao, Jiawei and Grama, Ananth and Tian, Yuandong and Wang, Zhangyang},
  booktitle={Forty-second International Conference on Machine Learning},
  year={2025}
}

@inproceedings{zhang2025breaking,
  title={Breaking the frozen subspace: Importance sampling for low-rank optimization in llm pretraining},
  author={Zhang, Haochen and Yin, Junze and Wang, Guanchu and Liu, Zirui and Yang, Lin and Zhang, Tianyi and Shrivastava, Anshumali and Braverman, Vladimir},
  booktitle={The Thirty-ninth Annual Conference on Neural Information Processing Systems},
  year={2025}
}

@article{he2024subspace,
  title={Subspace optimization for large language models with convergence guarantees},
  author={He, Yutong and Li, Pengrui and Hu, Yipeng and Chen, Chuyan and Yuan, Kun},
  journal={arXiv preprint arXiv:2410.11289},
  year={2024}
}

@article{das2024natural,
  title={Natural galore: Accelerating galore for memory-efficient llm training and fine-tuning},
  author={Das, Arijit},
  journal={arXiv preprint arXiv:2410.16029},
  year={2024}
}

@inproceedings{mo2025parameter,
  title={Parameter and memory efficient pretraining via low-rank riemannian optimization},
  author={Mo, Zhanfeng and Huang, Long-Kai and Pan, Sinno Jialin},
  booktitle={The Thirteenth International Conference on Learning Representations},
  year={2025}
}

@article{hu2022lora,
  title={Lora: Low-rank adaptation of large language models.},
  author={Hu, Edward J and Shen, Yelong and Wallis, Phillip and Allen-Zhu, Zeyuan and Li, Yuanzhi and Wang, Shean and Wang, Lu and Chen, Weizhu and others},
  journal={ICLR},
  volume={1},
  number={2},
  pages={3},
  year={2022}
}

@article{shoeybi2019megatron,
  title={Megatron-lm: Training multi-billion parameter language models using model parallelism},
  author={Shoeybi, Mohammad and Patwary, Mostofa and Puri, Raul and LeGresley, Patrick and Casper, Jared and Catanzaro, Bryan},
  journal={arXiv preprint arXiv:1909.08053},
  year={2019}
}

@inproceedings{jiang2024megascale,
  title={$\{$MegaScale$\}$: Scaling large language model training to more than 10,000 $\{$GPUs$\}$},
  author={Jiang, Ziheng and Lin, Haibin and Zhong, Yinmin and Huang, Qi and Chen, Yangrui and Zhang, Zhi and Peng, Yanghua and Li, Xiang and Xie, Cong and Nong, Shibiao and others},
  booktitle={21st USENIX Symposium on Networked Systems Design and Implementation (NSDI 24)},
  pages={745--760},
  year={2024}
}

@inproceedings{rajbhandari2021zero,
  title={Zero-infinity: Breaking the gpu memory wall for extreme scale deep learning},
  author={Rajbhandari, Samyam and Ruwase, Olatunji and Rasley, Jeff and Smith, Shaden and He, Yuxiong},
  booktitle={Proceedings of the international conference for high performance computing, networking, storage and analysis},
  pages={1--14},
  year={2021}
}

@article{zhao2023pytorch,
  title={Pytorch fsdp: experiences on scaling fully sharded data parallel},
  author={Zhao, Yanli and Gu, Andrew and Varma, Rohan and Luo, Liang and Huang, Chien-Chin and Xu, Min and Wright, Less and Shojanazeri, Hamid and Ott, Myle and Shleifer, Sam and others},
  journal={arXiv preprint arXiv:2304.11277},
  year={2023}
}

@inproceedings{karimireddy2019error,
  title={Error feedback fixes signsgd and other gradient compression schemes},
  author={Karimireddy, Sai Praneeth and Rebjock, Quentin and Stich, Sebastian and Jaggi, Martin},
  booktitle={International conference on machine learning},
  pages={3252--3261},
  year={2019},
  organization={PMLR}
}

@article{zhang2023adalora,
  title={Adalora: Adaptive budget allocation for parameter-efficient fine-tuning},
  author={Zhang, Qingru and Chen, Minshuo and Bukharin, Alexander and Karampatziakis, Nikos and He, Pengcheng and Cheng, Yu and Chen, Weizhu and Zhao, Tuo},
  journal={arXiv preprint arXiv:2303.10512},
  year={2023}
}

@article{dettmers2023qlora,
  title={Qlora: Efficient finetuning of quantized llms},
  author={Dettmers, Tim and Pagnoni, Artidoro and Holtzman, Ari and Zettlemoyer, Luke},
  journal={Advances in neural information processing systems},
  volume={36},
  pages={10088--10115},
  year={2023}
}

@inproceedings{liu2024dora,
  title={Dora: Weight-decomposed low-rank adaptation},
  author={Liu, Shih-Yang and Wang, Chien-Yi and Yin, Hongxu and Molchanov, Pavlo and Wang, Yu-Chiang Frank and Cheng, Kwang-Ting and Chen, Min-Hung},
  booktitle={Forty-first International Conference on Machine Learning},
  year={2024}
}

@inproceedings{george2024tensor,
  title={Tensor-GaLore: Memory-Efficient Training via Gradient Tensor Decomposition},
  author={George, Robert Joseph and Pitt, David and Zhao, Jiawei and Kossaifi, Jean and Luo, Cheng and Tian, Yuandong and Anandkumar, Anima},
  booktitle={OPT 2024: Optimization for Machine Learning},
  year={2024}
}

@article{liang2024memory,
  title={Memory-efficient llm training with online subspace descent},
  author={Liang, Kaizhao and Liu, Bo and Chen, Lizhang and Liu, Qiang},
  journal={Advances in Neural Information Processing Systems},
  volume={37},
  pages={64412--64432},
  year={2024}
}

@inproceedings{rasley2020deepspeed,
  title={Deepspeed: System optimizations enable training deep learning models with over 100 billion parameters},
  author={Rasley, Jeff and Rajbhandari, Samyam and Ruwase, Olatunji and He, Yuxiong},
  booktitle={Proceedings of the 26th ACM SIGKDD international conference on knowledge discovery \& data mining},
  pages={3505--3506},
  year={2020}
}

@article{chen2025greedy,
  title={Greedy low-rank gradient compression for distributed learning with convergence guarantees},
  author={Chen, Chuyan and He, Yutong and Li, Pengrui and Jia, Weichen and Yuan, Kun},
  journal={arXiv preprint arXiv:2507.08784},
  year={2025}
}

@article{brown2020language,
  title={Language models are few-shot learners},
  author={Brown, Tom and Mann, Benjamin and Ryder, Nick and Subbiah, Melanie and Kaplan, Jared D and Dhariwal, Prafulla and Neelakantan, Arvind and Shyam, Pranav and Sastry, Girish and Askell, Amanda and others},
  journal={Advances in neural information processing systems},
  volume={33},
  pages={1877--1901},
  year={2020}
}

@article{bommasani2021opportunities,
  title={On the opportunities and risks of foundation models},
  author={Bommasani, Rishi},
  journal={arXiv preprint arXiv:2108.07258},
  year={2021}
}

@article{touvron2023llama,
  title={Llama 2: Open foundation and fine-tuned chat models},
  author={Touvron, Hugo and Martin, Louis and Stone, Kevin and Albert, Peter and Almahairi, Amjad and Babaei, Yasmine and Bashlykov, Nikolay and Batra, Soumya and Bhargava, Prajjwal and Bhosale, Shruti and others},
  journal={arXiv preprint arXiv:2307.09288},
  year={2023}
}

@article{raffel2020exploring,
  title={Exploring the limits of transfer learning with a unified text-to-text transformer},
  author={Raffel, Colin and Shazeer, Noam and Roberts, Adam and Lee, Katherine and Narang, Sharan and Matena, Michael and Zhou, Yanqi and Li, Wei and Liu, Peter J},
  journal={Journal of machine learning research},
  volume={21},
  number={140},
  pages={1--67},
  year={2020}
}

@inproceedings{wang2018glue,
  title={GLUE: A multi-task benchmark and analysis platform for natural language understanding},
  author={Wang, Alex and Singh, Amanpreet and Michael, Julian and Hill, Felix and Levy, Omer and Bowman, Samuel},
  booktitle={Proceedings of the 2018 EMNLP workshop BlackboxNLP: Analyzing and interpreting neural networks for NLP},
  pages={353--355},
  year={2018}
}

@article{liu2019roberta,
  title={Roberta: A robustly optimized bert pretraining approach},
  author={Liu, Yinhan and Ott, Myle and Goyal, Naman and Du, Jingfei and Joshi, Mandar and Chen, Danqi and Levy, Omer and Lewis, Mike and Zettlemoyer, Luke and Stoyanov, Veselin},
  journal={arXiv preprint arXiv:1907.11692},
  year={2019}
}
\bibliographystyle{plainnat}

\newpage
\appendix
\paragraph{Road Map.}
Appendix~\ref{app_sec:relate} further elaborates on the relevant work.
Appendix~\ref{app_sec:proofs} provides a detailed proof of the convergence of the algorithm, including the TSR-SGD version of the algorithm, and Appendix~\ref{app:exp} shows more experimental setups and experimental graphs.

\section{Related Works}
\label{app_sec:relate}
\paragraph{Communication Bottlenecks in Large-Scale Pretraining of LLMs}
The training of large language models at scale is often limited not by compute, but by communication overhead. When hundreds or thousands of GPUs are used in distributed data-parallel training, the cost of synchronizing gradients and model updates over interconnects (PCIe, NVLink, InfiniBand/Ethernet) can dominate the iteration time~\cite{tang20211,narayanan2021efficient}. In modern LLM pretraining runs, network bandwidth becomes a primary bottleneck – for example, multi-node training must send enormous gradients (on the order of billions of parameters) across relatively slower inter-node links, leading to idle compute while GPUs wait on data exchange~\cite{narayanan2021efficient}. This has motivated a long line of communication-efficient optimization techniques to alleviate bandwidth limitations in distributed training of foundation models.

\textbf{Optimizer state sharding}. One approach is to eliminate redundant data transfers by partitioning or sharding optimizer states across workers. ZeRO (Zero Redundancy Optimizer) introduced by Rajbhandari et al~\cite{rajbhandari2020zero}. shards the gradient histories and optimizer momentum/variance so that each GPU only stores (and communicates) a fraction of the full state, instead of replicating it on all GPUs. By stripping away duplicate communication and memory, ZeRO achieved near-linear scaling for models up to 100+ billion parameters, retaining low communication volume despite massive model size~\cite{rajbhandari2020zero}. This idea underpins Fully Sharded Data Parallel schemes, ensuring that bandwidth usage grows sub-linearly with the number of accelerators.

\textbf{Gradient quantization and compression}. Another class of methods reduces the number of bits or values that need to be exchanged each step. For instance, QSGD (Alistarh et al., 2017) proposed quantizing gradients to low-bit representations, showing one can trade off communication bandwidth for convergence time by sending fewer bits per gradient component~\cite{alistarh2017qsgd}. Such lossy compression directly targets the “high bandwidth cost of communicating gradient updates” in parallel SGD~\cite{alistarh2017qsgd}. Following this, researchers developed 1-bit SGD/Adam techniques that transmit only a 1-bit sign or compressed form of each gradient with error compensation. Notably, 1-bit Adam~\cite{tang20211}demonstrated up to 5× reduction in communication volume with no loss in convergence speed compared to uncompressed Adam. This was achieved by accumulating quantization errors and leveraging the stability of Adam’s second moment estimates. Similarly, techniques for 8-bit compression of gradients/updates have been explored – for example, Dettmers et al. developed 8-bit optimizers that store and communicate optimizer statistics at 8-bit precision while preserving 32-bit performance~\cite{dettmers20218}. By quantizing gradient exchanges (e.g. 8-bit or even 4-bit), these methods cut bandwidth requirements substantially, at the cost of minimal noise that can be mitigated with proper tuning.

\textbf{Gradient sparsification.} In parallel, researchers have investigated sending only a subset of gradient elements to reduce traffic. Deep Gradient Compression (DGC)~\cite{lin2017deep} showed that 99.9\% of gradient values in distributed training can be deemed redundant and pruned away with little impact on model quality. DGC aggressively sparsifies gradients (e.g. by transmitting only the top-$k$ magnitude components) and uses techniques like momentum correction and warm-up to preserve accuracy~\cite{lin2017deep}. The result was a 270×–600× compression of gradient data (e.g. reducing a ResNet-50 gradient from 97 MB to 0.35 MB) without loss of accuracy~\cite{lin2017deep}. Such extreme compression allowed training on bandwidth-limited 1 Gbps Ethernet clusters that would otherwise be prohibitively slow~\cite{lin2017deep}. Gradient sparsification and pruning, combined with error-feedback mechanisms, continue to be effective for cutting down communication in large-scale setups.

\textbf{Low-rank gradient communication.} More recently, structured approaches have been proposed to compress gradients by exploiting linear correlations. PowerSGD~\cite{vogels2019powersgd} is a representative method that approximates the gradient matrix in each layer by a low-rank factorization and only communicates the low-rank factors. By using a small rank (e.g. $r=1$ or $4$) and a power iteration method to find these factors, PowerSGD can drastically reduce bandwidth while maintaining final accuracy close to full-precision SGD. It was one of the first schemes to demonstrate actual wall-clock speedups in distributed training owing to communication savings~\cite{vogels2019powersgd}. Related work has combined low-rank compression with second-order information – for example, using a natural gradient low-rank scheme that preconditions updates. Mu et al. (2023) showed that a hybrid low-rank natural gradient method can reduce communication time by over an order of magnitude (up to $10.7\times$) on multi-GPU training, thanks to smaller transmitted updates and fewer synchronization steps~\cite{mu2022hylo}. Overall, by compressing gradients via quantization, sparsity, or low-rank approximation, distributed LLM training can significantly mitigate network bottlenecks and push closer to the limits of hardware bandwidth. These communication-efficient optimizers have become crucial as model sizes and cluster scales continue to grow, ensuring that networking infrastructure does not become the rate limiter in large-scale pretraining.
\paragraph{Low-Rank Optimization for Memory-Efficient Training}
Scaling up model size not only stresses communication, but also introduces severe memory and compute challenges. Modern LLMs with billions of parameters demand enormous memory to store model weights, optimizer states, and gradients. For example, training a 7B-parameter transformer from scratch with Adam optimizer can require on the order of 60 GB of memory per GPU, with optimizer momentum and variance states taking much more space than the weights themselves~\cite{zhao2024galore}. To address this, researchers have explored low-rank optimization methods that reduce memory and compute usage by restricting updates to low-dimensional subspaces.

\textbf{Low-Rank Adaptation (LoRA). }A classical approach for memory-efficient training is to constrain weight updates to a low-rank form. LoRA~\cite{hu2022lora} injects trainable low-rank matrices into each layer’s weights instead of updating the full weight matrix~\cite{zhao2024galore}. In LoRA, a pre-trained weight $W_0$ is frozen and two small rank-$r$ matrices $U,V$ are learned such that $W = W_0 + U V^T$. Because $U$ and $V$ have far fewer parameters than $W$ (with $r \ll \text{dim}(W)$), the number of trainable parameters – and crucially the optimizer states for them – are dramatically reduced~\cite{zhao2024galore}. LoRA proved to be an efficient fine-tuning strategy, cutting memory usage while maintaining decent performance on downstream tasks. However, LoRA’s low-rank constraint can become a limitation in full training: the model is confined to a tiny subspace of the full parameter space. Recent studies found that LoRA fine-tuning often cannot match the accuracy of full-model tuning, and in pre-training from scratch it may even require a full-rank warm-up phase before low-rank updates are viable~\cite{zhao2024galore}. In essence, if the true optimal weights are not low-rank or if the low-rank reparameterization alters training dynamics, a fixed small subspace can hurt convergence. This motivates methods that preserve the flexibility of full-rank training while still reaping memory savings from low-rank structures.

\textbf{Gradient low-rank projection (GaLore).} Instead of freezing weights to a low-rank form, one can project gradients onto a low-rank subspace during training. GaLore~\cite{zhao2024galore} is an optimizer that maintains full-rank model weights but restricts each weight’s gradient to a learned low-rank subspace. The key insight is that as training progresses, the gradients of large models often lie in a much lower-dimensional subspace than the parameter count – they have a slowly changing, low-rank structure. GaLore periodically computes a small set of basis vectors (e.g. via SVD on recent gradients) and projects incoming gradients onto this subspace, then applies the optimizer (Adam, Adagrad, etc.) in the compressed space~\cite{zhao2024galore}. By doing so, the memory-heavy optimizer states (momentum, etc.) are only kept for the low-rank coordinates rather than for every parameter. This approach allows full-parameter training (no accuracy drop from restricting model capacity) but yields major memory savings: GaLore reports up to a 65\% reduction in optimizer memory compared to standard Adam, and up to 82\% reduction when combined with 8-bit quantized states~\cite{zhao2024galore}. In practice, GaLore enabled, for the first time, pre-training a 7B LLM on a single 24 GB GPU without any offloading or model-parallel tricks~\cite{zhao2024galore}. Compared to LoRA, which limits the model to a fixed low-rank, GaLore’s online subspace descent allows the subspace to evolve – e.g. updating the projection basis every few hundred steps – which was shown to preserve model quality while still reducing memory by \~30\% relative to LoRA’s approach.

\textbf{Extensions of GaLore.} The idea of low-rank optimizer updates has spurred many follow-ups. Some works add quantization and other structure on top of GaLore – for example, 8-bit GaLore quantizes the projected gradients to further cut memory, and researchers have explored using higher-order tensor decompositions (Tensor-GaLore~\cite{george2024tensor}) to compress gradients along multiple modes~\cite{su2025galore}. These improvements address practical bottlenecks: GaLore’s original implementation incurred non-negligible SVD computation for subspace updates and didn’t natively support sharded data parallel. In GaLore 2~\cite{su2025galore}, the framework was optimized with fast randomized SVD updates and integration into PyTorch’s Fully Sharded Data Parallel, making it efficient and scalable for real-world runs. GaLore 2 demonstrated the ability to pre-train a LLaMA-7B model on 500 billion tokens using low-rank gradient projection, indicating that such methods can indeed support large-scale training without hitting memory limits. Beyond GaLore itself, researchers have looked at the connection between gradient subspaces and weight matrices.~\cite{jaiswallow} observed that as training converges, certain layers’ gradients stabilize in a low-rank subspace and correspondingly the weights exhibit implicit low-rank structure. They introduced WeLore (Weight Low-Rank Projection), a one-shot compression technique that leverages these observations: after pretraining, it categorizes weight matrices into those that can be compressed (low-rank) and those that cannot, and compresses the former for memory-efficient fine-tuning or inference~\cite{jaiswallow}. WeLore essentially unifies model compression with the low-rank training insights from GaLore, achieving significant memory reduction with minimal performance drop by tailoring the rank per layer.

\textbf{Dynamic subspace and theoretical advances.} An important question for low-rank training is how to choose and update the subspace. If the subspace is too restrictive or is not refreshed, training can stagnate.~\cite{zhang2025breaking} highlighted this in “Breaking the Frozen Subspace”: they found that methods like GaLore which pick the dominant gradient subspace can suffer because the top principal components of the gradient stop changing over time, effectively freezing the update directions. To counter this, they proposed an importance sampling strategy that occasionally forces exploration of new random directions outside the current subspace, and proved that this yields convergence guarantees that the purely dominant subspace approach lacks. Empirically, their method outperformed prior low-rank optimizers in LLM pretraining, underlining the need to keep the optimization subspace from becoming too static. On the theoretical side,~\cite{he2024subspace} examined GaLore’s convergence properties and showed that GaLore may fail to converge in stochastic training unless certain conditions (large batch sizes or isotropic gradient noise) hold. They introduced a variant called GoLore (Gradient random Low-Rank projection) that uses randomization in the projection step, and proved it converges reliably in standard SGD settings. This work provides formal guarantees for subspace optimization algorithms and guides how to design low-rank updates that are stable in theory as well as practice.

Finally, researchers have begun integrating low-rank methods with second-order optimization and manifold geometry. Natural GaLore~\cite{das2024natural} is one such approach that applies a Riemannian preconditioning to GaLore’s low-rank updates. By multiplying the low-rank gradient by the inverse Fisher information matrix (estimated efficiently via the Woodbury identity), Natural GaLore accelerates convergence without additional memory overhead. This can be seen as performing natural gradient descent within the low-rank subspace, yielding faster loss reduction especially when the training budget (number of steps) is limited. In a similar vein,~\cite{mo2025parameter}proposed LORO (Low-Rank Riemannian Optimizer) for full low-rank pretraining. Instead of maintaining a full-rank model, LORO parameterizes each weight matrix as the product of two rank-$r$ factors from scratch and optimizes these factors jointly on the appropriate Riemannian manifold. By updating the factor pair in tandem along the manifold’s steepest descent direction (avoiding any reconstruction of full-sized gradients), LORO manages to learn low-rank models that match or even exceed the quality of full models, while using far less memory. Notably, an LLM pre-trained with LORO (rank 256 on a 1B parameter model) reached a slightly better perplexity than a full model, despite using ~54\% less memory and achieving 1.8× faster training and 2.2× faster inference throughput. Such results are encouraging as they suggest that, with proper optimization on the low-rank manifold, we can attain the dual goals of memory efficiency and high performance. In summary, a rich landscape of low-rank optimization methods – from practical GaLore-based techniques to theoretical and second-order enhancements – has emerged to enable memory-efficient training of large models. These works collectively pave the way for optimizers that address both communication and memory bottlenecks in large-scale LLM pretraining, which is precisely the gap our proposed method aims to fill.

\clearpage


\newpage
\section{Detailed Convergence Analysis}
\label{app_sec:proofs}

\subsection{
\texorpdfstring
  {Part 1: One-step tracking recursion (no refresh, before shifting $w_t \to w_{t-1}$)}
  {Part 1: One-step tracking recursion (no refresh, before shifting wt to w(t-1))}
}
\label{subsec:proof_part1}

We analyze a fixed step $t$ under the no-refresh condition, i.e., $U_t=U_{t-1}$ and $V_t=V_{t-1}$.
Recall the core-space momentum update
\begin{equation}
m_t = \beta m_{t-1} + (1-\beta)\bar{C}_t.
\label{eq:mt_update_part1}
\end{equation}

\paragraph{Projected gradient and approximation error.}
Define the projected full gradient
\begin{equation}
P_t \;:=\; U_t U_t^\top \nabla f(w_t)\, V_t V_t^\top .
\label{eq:Pt_def}
\end{equation}
Define the low-rank projection approximation error
\begin{equation}
\Delta_t \;:=\; \mathbb{E}\left\| P_t - \nabla f(w_t) \right\|^2.
\label{eq:Delta_def_part1}
\end{equation}

\paragraph{Assumption (unbiasedness and variance in the projected space).}
Let $\mathcal{F}_{t-1}$ denote the sigma-field generated by the history up to step $t-1$.
Under no refresh, $m_{t-1},U_t,V_t,w_t$ are $\mathcal{F}_{t-1}$-measurable.
We assume the synchronized core induces an unbiased estimate of the projected gradient:
\begin{equation}
\mathbb{E}\!\left[ U_t \bar{C}_t V_t^\top \,\middle|\, \mathcal{F}_{t-1}\right] \;=\; P_t,
\label{eq:unbiased_projected}
\end{equation}
and satisfies the variance bound
\begin{equation}
\mathbb{E}\left\| U_t \bar{C}_t V_t^\top - P_t \right\|^2 \;\le\; \sigma_t^2.
\label{eq:variance_bound_part1}
\end{equation}

\paragraph{Step 1: Expand the squared tracking error.}
Define
\begin{equation}
A_t := U_t m_{t-1}V_t^\top - \nabla f(w_t),\qquad
B_t := U_t \bar{C}_tV_t^\top - \nabla f(w_t).
\label{eq:AtBt_def}
\end{equation}
Using \eqref{eq:mt_update_part1}, we have
\begin{align}
\mathbb{E}\left\|U_t m_t V_t^\top - \nabla f(w_t)\right\|^2
&=\mathbb{E}\left\|\beta A_t + (1-\beta)B_t\right\|^2 \nonumber\\
&= \beta^2 \mathbb{E}\|A_t\|^2 + (1-\beta)^2 \mathbb{E}\|B_t\|^2
+ 2\beta(1-\beta)\mathbb{E}\langle A_t,B_t\rangle .
\label{eq:expand_part1}
\end{align}

\paragraph{Step 2: Handle the cross term $\mathbb{E}\langle A_t,B_t\rangle$.}
Decompose $B_t$ by adding and subtracting $P_t$:
\begin{equation}
B_t
=
\underbrace{\big(U_t \bar{C}_tV_t^\top - P_t\big)}_{=:S_t}
+
\underbrace{\big(P_t - \nabla f(w_t)\big)}_{=:Z_t}.
\label{eq:Bt_decomp}
\end{equation}
Then
\begin{equation}
\mathbb{E}\langle A_t,B_t\rangle
= \mathbb{E}\langle A_t,S_t\rangle + \mathbb{E}\langle A_t,Z_t\rangle.
\label{eq:cross_decomp}
\end{equation}
For the first term, by \eqref{eq:unbiased_projected} and the tower property,
\begin{align}
\mathbb{E}\langle A_t,S_t\rangle
&= \mathbb{E}\Big[\mathbb{E}\big[\langle A_t,\,U_t \bar{C}_tV_t^\top - P_t\rangle \mid \mathcal{F}_{t-1}\big]\Big] \nonumber\\
&= \mathbb{E}\Big[\big\langle A_t,\,\mathbb{E}[U_t \bar{C}_tV_t^\top - P_t\mid \mathcal{F}_{t-1}]\big\rangle\Big] \nonumber\\
&= 0.
\label{eq:cross_noise_zero}
\end{align}
Hence,
\begin{equation}
\mathbb{E}\langle A_t,B_t\rangle = \mathbb{E}\langle A_t,Z_t\rangle.
\label{eq:cross_reduced}
\end{equation}

Next we bound $\mathbb{E}\langle A_t,Z_t\rangle$ via Young's inequality: for any $\alpha>0$,
\begin{equation}
2\langle x,y\rangle \le \alpha\|x\|^2 + \frac{1}{\alpha}\|y\|^2.
\label{eq:young_part1}
\end{equation}
Applying \eqref{eq:young_part1} with $x=A_t$, $y=Z_t$ and multiplying by $\beta(1-\beta)$ gives
\begin{equation}
2\beta(1-\beta)\langle A_t,Z_t\rangle
\le \beta(1-\beta)\alpha\|A_t\|^2 + \beta(1-\beta)\frac{1}{\alpha}\|Z_t\|^2.
\label{eq:cross_young}
\end{equation}
Choose
\begin{equation}
\alpha := \frac{1+\beta}{2\beta},
\label{eq:alpha_choice}
\end{equation}
so that $\beta(1-\beta)\alpha = \frac{1-\beta^2}{2}$ and $\beta(1-\beta)\frac{1}{\alpha} = \frac{2\beta^2(1-\beta)}{1+\beta}$.
Taking expectation and using \eqref{eq:cross_reduced} yields
\begin{equation}
2\beta(1-\beta)\mathbb{E}\langle A_t,B_t\rangle
\le \frac{1-\beta^2}{2}\mathbb{E}\|A_t\|^2
+ \frac{2\beta^2(1-\beta)}{1+\beta}\|Z_t\|^2.
\label{eq:cross_final_part1}
\end{equation}

\paragraph{Step 3: Bound $\mathbb{E}\|B_t\|^2$ by variance and approximation.}
Using \eqref{eq:Bt_decomp} and $\|x+y\|^2\le 2\|x\|^2+2\|y\|^2$,
\begin{align}
\mathbb{E}\|B_t\|^2
&=\mathbb{E}\|S_t + Z_t\|^2 \nonumber\\
&\le 2\mathbb{E}\|S_t\|^2 + 2\|Z_t\|^2 \nonumber\\
&\le 2\sigma_t^2 + 2\Delta_t,
\label{eq:Bt_bound_part1}
\end{align}
where the last inequality uses \eqref{eq:variance_bound_part1} and \eqref{eq:Delta_def_part1}.

\paragraph{Step 4: Combine the bounds.}
Plugging \eqref{eq:cross_final_part1} and \eqref{eq:Bt_bound_part1} into \eqref{eq:expand_part1}, we obtain
\begin{align}
\mathbb{E}\|U_t m_t V_t^\top - \nabla f(w_t)\|^2
&\le \beta^2 \mathbb{E}\|A_t\|^2
+ (1-\beta)^2\big(2\sigma_t^2 + 2\Delta_t\big)
+ \frac{1-\beta^2}{2}\mathbb{E}\|A_t\|^2
+ \frac{2\beta^2(1-\beta)}{1+\beta}\Delta_t \nonumber\\
&= \frac{1+\beta^2}{2}\,\mathbb{E}\|A_t\|^2
+ 2(1-\beta)^2 \sigma_t^2
+ \left(2(1-\beta)^2 + \frac{2\beta^2(1-\beta)}{1+\beta}\right)\Delta_t \nonumber\\
&= \frac{1+\beta^2}{2}\,\mathbb{E}\|U_t m_{t-1}V_t^\top - \nabla f(w_t)\|^2
+ 2(1-\beta)^2 \sigma_t^2
+ \frac{2(1-\beta)}{1+\beta}\,\Delta_t.
\label{eq:part1_final_recursion}
\end{align}

\subsection{
\texorpdfstring
  {Part 2: Shifting $\nabla f(w_t)$ to $\nabla f(w_{t-1})$ (no refresh)}
  {Part 2: Shifting grad f(wt) to grad f(w(t-1)) (no refresh)}
}

\label{subsec:proof_part2}

We continue under the no-refresh condition ($U_t=U_{t-1}$ and $V_t=V_{t-1}$).
Starting from \eqref{eq:part1_final_recursion}, the remaining task is to upper bound
$\mathbb{E}\|U_t m_{t-1}V_t^\top-\nabla f(w_t)\|^2$ by a term evaluated at $w_{t-1}$ plus a smoothness error.

\paragraph{Step 1: A quadratic splitting inequality.}
For any $\gamma>0$ and any matrices $X,Y$ of the same shape,
\begin{align}
\|X+Y\|^2
&= \|X\|^2 + \|Y\|^2 + 2\langle X,Y\rangle \nonumber\\
&\le \|X\|^2 + \|Y\|^2 + \gamma\|X\|^2 + \frac{1}{\gamma}\|Y\|^2 
\qquad \text{(by Young: $2\langle X,Y\rangle \le \gamma\|X\|^2 + \frac{1}{\gamma}\|Y\|^2$)} \nonumber\\
&= (1+\gamma)\|X\|^2 + \left(1+\frac{1}{\gamma}\right)\|Y\|^2 .
\label{eq:split_xy_part2}
\end{align}

\paragraph{Step 2: Apply the split with $X=U_t m_{t-1}V_t^\top-\nabla f(w_{t-1})$.}
Note that
\begin{align}
U_t m_{t-1}V_t^\top - \nabla f(w_t)
&=
\Big(U_t m_{t-1}V_t^\top - \nabla f(w_{t-1})\Big)
+
\Big(\nabla f(w_{t-1})-\nabla f(w_t)\Big).
\label{eq:decomp_shift_part2}
\end{align}
Applying \eqref{eq:split_xy_part2} to \eqref{eq:decomp_shift_part2} gives, for any $\gamma>0$,
\begin{align}
\left\|U_t m_{t-1}V_t^\top - \nabla f(w_t)\right\|^2
&\le
(1+\gamma)\left\|U_t m_{t-1}V_t^\top - \nabla f(w_{t-1})\right\|^2 \nonumber\\
&\quad + \left(1+\frac{1}{\gamma}\right)\left\|\nabla f(w_t)-\nabla f(w_{t-1})\right\|^2.
\label{eq:shift_bound_part2_gamma}
\end{align}
Taking expectation on both sides preserves the inequality:
\begin{align}
\mathbb{E}\left\|U_t m_{t-1}V_t^\top - \nabla f(w_t)\right\|^2
&\le
(1+\gamma)\,\mathbb{E}\left\|U_t m_{t-1}V_t^\top - \nabla f(w_{t-1})\right\|^2 \nonumber\\
&\quad + \left(1+\frac{1}{\gamma}\right)\mathbb{E}\left\|\nabla f(w_t)-\nabla f(w_{t-1})\right\|^2.
\label{eq:shift_bound_part2_expect}
\end{align}

\paragraph{Step 3: Use $L$-smoothness to bound the gradient difference.}
Under $L$-smoothness (Assumption 2 in the main paper),
\begin{equation}
\left\|\nabla f(w_t)-\nabla f(w_{t-1})\right\|
\le L\left\|w_t-w_{t-1}\right\|.
\label{eq:smooth_grad_diff}
\end{equation}
Squaring both sides yields
\begin{equation}
\left\|\nabla f(w_t)-\nabla f(w_{t-1})\right\|^2
\le L^2\left\|w_t-w_{t-1}\right\|^2.
\label{eq:smooth_grad_diff_sq}
\end{equation}
Substituting \eqref{eq:smooth_grad_diff_sq} into \eqref{eq:shift_bound_part2_expect} gives
\begin{align}
\mathbb{E}\left\|U_t m_{t-1}V_t^\top - \nabla f(w_t)\right\|^2
&\le
(1+\gamma)\,\mathbb{E}\left\|U_t m_{t-1}V_t^\top - \nabla f(w_{t-1})\right\|^2 \nonumber\\
&\quad + \left(1+\frac{1}{\gamma}\right)L^2\,\mathbb{E}\left\|w_t-w_{t-1}\right\|^2.
\label{eq:shift_bound_part2_smooth}
\end{align}

\paragraph{Step 4: Choose $\gamma=\frac{1-\beta^2}{4}$ and plug into Part 1.}
Set
\begin{equation}
\gamma := \frac{1-\beta^2}{4},
\label{eq:gamma_choice_part2}
\end{equation}
so that
\begin{equation}
1+\gamma = 1+\frac{1-\beta^2}{4},\qquad
1+\frac{1}{\gamma} = 1+\frac{4}{1-\beta^2}=\frac{5-\beta^2}{1-\beta^2}.
\label{eq:gamma_factors_part2}
\end{equation}
Combining \eqref{eq:part1_final_recursion} with \eqref{eq:shift_bound_part2_smooth} and \eqref{eq:gamma_factors_part2} yields
\begin{align}
\mathbb{E}\|U_t m_t V_t^\top - \nabla f(w_t)\|^2
&\le \frac{1+\beta^2}{2}\left(1+\frac{1-\beta^2}{4}\right)
\mathbb{E}\left\|U_t m_{t-1}V_t^\top - \nabla f(w_{t-1})\right\|^2 \nonumber\\
&\quad + \frac{1+\beta^2}{2}\cdot\frac{5-\beta^2}{1-\beta^2}\,L^2\,\mathbb{E}\|w_t-w_{t-1}\|^2 \nonumber\\
&\quad + 2(1-\beta)^2\sigma_t^2 + \frac{2(1-\beta)}{1+\beta}\Delta_t.
\label{eq:part2_plug_in}
\end{align}

\paragraph{Step 5: Simplify the leading coefficients (no skipped algebra).}
For the contraction coefficient in the first line of \eqref{eq:part2_plug_in},
\begin{align}
\frac{1+\beta^2}{2}\left(1+\frac{1-\beta^2}{4}\right)
&=
\frac{1+\beta^2}{2}\cdot\frac{5-\beta^2}{4}
=
\frac{(1+\beta^2)(5-\beta^2)}{8}
=
\frac{5+4\beta^2-\beta^4}{8}.
\label{eq:coeff_expand_part2}
\end{align}
Moreover,
\begin{align}
\frac{3+\beta^2}{4} - \frac{5+4\beta^2-\beta^4}{8}
&=
\frac{2(3+\beta^2) - (5+4\beta^2-\beta^4)}{8} \nonumber\\
&=
\frac{6+2\beta^2-5-4\beta^2+\beta^4}{8}
=
\frac{(1-\beta^2)^2}{8}
\;\ge\;0,
\label{eq:coeff_compare_part2}
\end{align}
which implies
\begin{equation}
\frac{5+4\beta^2-\beta^4}{8}
\le
\frac{3+\beta^2}{4}
=
1-\frac{1-\beta^2}{4}.
\label{eq:coeff_contract_part2}
\end{equation}
For the smoothness coefficient in the second line of \eqref{eq:part2_plug_in},
\begin{align}
\frac{1+\beta^2}{2}\cdot\frac{5-\beta^2}{1-\beta^2}
&=
\frac{5+4\beta^2-\beta^4}{2(1-\beta^2)}.
\label{eq:coeff_smooth_exact_part2}
\end{align}
Also,
\begin{align}
\frac{5-\beta^2}{1-\beta^2} - \frac{5+4\beta^2-\beta^4}{2(1-\beta^2)}
&=
\frac{2(5-\beta^2)-(5+4\beta^2-\beta^4)}{2(1-\beta^2)} \nonumber\\
&=
\frac{10-2\beta^2-5-4\beta^2+\beta^4}{2(1-\beta^2)}
=
\frac{\beta^4-6\beta^2+5}{2(1-\beta^2)} \nonumber\\
&=
\frac{(1-\beta^2)(5-\beta^2)}{2(1-\beta^2)}
=
\frac{5-\beta^2}{2}
\;\ge\;0,
\label{eq:coeff_smooth_compare_part2}
\end{align}
hence
\begin{equation}
\frac{5+4\beta^2-\beta^4}{2(1-\beta^2)}
\le
\frac{5-\beta^2}{1-\beta^2}.
\label{eq:coeff_smooth_bound_part2}
\end{equation}

\paragraph{Final bound for Part 2 (no refresh).}
Substituting \eqref{eq:coeff_contract_part2} and \eqref{eq:coeff_smooth_bound_part2} into \eqref{eq:part2_plug_in}, we obtain
\begin{align}
\mathbb{E}\|U_t m_t V_t^\top - \nabla f(w_t)\|^2
&\le \left(1-\frac{1-\beta^2}{4}\right)
\mathbb{E}\left\|U_t m_{t-1}V_t^\top - \nabla f(w_{t-1})\right\|^2 \nonumber\\
&\quad + \frac{5-\beta^2}{1-\beta^2}\,L^2\,\mathbb{E}\|w_t-w_{t-1}\|^2
+ 2(1-\beta)^2\sigma_t^2 + \frac{2(1-\beta)}{1+\beta}\Delta_t.
\label{eq:part2_final}
\end{align}

\subsection{
\texorpdfstring
  {Part 3: Refresh steps and a unified recursion with an $R_t$ term}
  {Part 3: Refresh steps and a unified recursion with an Rt term}
}
\label{subsec:proof_part3}

In this part we incorporate the effect of refreshing $U_t,V_t$ (i.e., $U_t\neq U_{t-1}$ and/or $V_t\neq V_{t-1}$),
and derive a unified one-step recursion that holds for both refresh and non-refresh steps.

\paragraph{Notation.}
Define the tracking error
\begin{equation}
E_t \;:=\; \mathbb{E}\left\|U_t m_t V_t^\top - \nabla f(w_t)\right\|^2.
\label{eq:Et_def}
\end{equation}
Let $\mathbb{I}_{\mathrm{refresh}}$ be the indicator of a refresh step, i.e.,
$\mathbb{I}_{\mathrm{refresh}}=1$ if $t \bmod K = 0$ and $\mathbb{I}_{\mathrm{refresh}}=0$ otherwise.

\paragraph{Step 1: A bound for the \texorpdfstring{$w_{t-1}$}{w\_{t-1}} tracking term under refresh.}
At a refresh step, we need to relate
$\left\|U_t m_{t-1}V_t^\top - \nabla f(w_{t-1})\right\|^2$
to the previous-basis quantity
$\left\|U_{t-1} m_{t-1}V_{t-1}^\top - \nabla f(w_{t-1})\right\|^2$.

Write
\begin{align}
U_t m_{t-1}V_t^\top - \nabla f(w_{t-1})
&=
\underbrace{\Big(U_t m_{t-1}V_t^\top - U_{t-1} m_{t-1}V_{t-1}^\top\Big)}_{=:A_t^{\mathrm{ref}}}
+
\underbrace{\Big(U_{t-1} m_{t-1}V_{t-1}^\top - \nabla f(w_{t-1})\Big)}_{=:B_{t-1}}.
\label{eq:refresh_decomp}
\end{align}
Using the splitting inequality (Young), for any $\gamma>0$,
\begin{equation}
\|A_t^{\mathrm{ref}}+B_{t-1}\|^2
\le
\left(1+\frac{1}{\gamma}\right)\|A_t^{\mathrm{ref}}\|^2
+
(1+\gamma)\|B_{t-1}\|^2.
\label{eq:young_split_refresh}
\end{equation}
Choose
\begin{equation}
\gamma := \frac{1-\beta^2}{8},
\label{eq:gamma_refresh_choice}
\end{equation}
so that
\begin{equation}
1+\gamma = 1+\frac{1-\beta^2}{8},
\qquad
1+\frac{1}{\gamma} = 1+\frac{8}{1-\beta^2}.
\label{eq:gamma_refresh_factors}
\end{equation}
Then \eqref{eq:refresh_decomp}--\eqref{eq:gamma_refresh_factors} imply
\begin{align}
\left\|U_t m_{t-1}V_t^\top - \nabla f(w_{t-1})\right\|^2
\le\;&
\left(1+\frac{8}{1-\beta^2}\right)
\left\|U_t m_{t-1}V_t^\top - U_{t-1} m_{t-1}V_{t-1}^\top\right\|^2 \nonumber\\
&+
\left(1+\frac{1-\beta^2}{8}\right)
\left\|U_{t-1} m_{t-1}V_{t-1}^\top - \nabla f(w_{t-1})\right\|^2.
\label{eq:refresh_track_bound}
\end{align}

\paragraph{Step 2: Plug the refresh bound into the no-refresh recursion from Part 2.}
From Part 2, we already have (for any step $t$) the inequality
\begin{align}
E_t
\le\;&
\left(1-\frac{1-\beta^2}{4}\right)\,
\mathbb{E}\left\|U_t m_{t-1}V_t^\top - \nabla f(w_{t-1})\right\|^2
+
\frac{5-\beta^2}{1-\beta^2}\,L^2\,\mathbb{E}\|w_t-w_{t-1}\|^2 \nonumber\\
&+
2(1-\beta)^2\sigma_t^2
+
\frac{2(1-\beta)}{1+\beta}\Delta_t.
\label{eq:part2_reused}
\end{align}

If $t$ is a refresh step, apply \eqref{eq:refresh_track_bound} inside \eqref{eq:part2_reused}:
\begin{align}
E_t
\le\;&
\left(1-\frac{1-\beta^2}{4}\right)\left(1+\frac{1-\beta^2}{8}\right)
\mathbb{E}\left\|U_{t-1} m_{t-1}V_{t-1}^\top - \nabla f(w_{t-1})\right\|^2 \nonumber\\
&+
\left(1-\frac{1-\beta^2}{4}\right)\left(1+\frac{8}{1-\beta^2}\right)
\mathbb{E}\left\|U_t m_{t-1}V_t^\top - U_{t-1} m_{t-1}V_{t-1}^\top\right\|^2 \nonumber\\
&+
\frac{5-\beta^2}{1-\beta^2}\,L^2\,\mathbb{E}\|w_t-w_{t-1}\|^2
+
2(1-\beta)^2\sigma_t^2
+
\frac{2(1-\beta)}{1+\beta}\Delta_t.
\label{eq:refresh_plugged}
\end{align}

Now we simplify the coefficients \emph{without skipping steps}.
Let $x:=1-\beta^2\in(0,1]$. Then
\begin{align}
\left(1-\frac{x}{4}\right)\left(1+\frac{x}{8}\right)
&=
1+\frac{x}{8}-\frac{x}{4}-\frac{x^2}{32}
=
1-\frac{x}{8}-\frac{x^2}{32}
\le
1-\frac{x}{8}
=
1-\frac{1-\beta^2}{8}.
\label{eq:contract_to_1minus_x_over8}
\end{align}
Also, since $1-\frac{1-\beta^2}{4}\le 1$,
\begin{equation}
\left(1-\frac{1-\beta^2}{4}\right)\left(1+\frac{8}{1-\beta^2}\right)
\le
\left(1+\frac{8}{1-\beta^2}\right).
\label{eq:drop_a_for_R}
\end{equation}
Applying \eqref{eq:contract_to_1minus_x_over8} and \eqref{eq:drop_a_for_R} to \eqref{eq:refresh_plugged}, we get
\begin{align}
E_t
\le\;&
\left(1-\frac{1-\beta^2}{8}\right)
\mathbb{E}\left\|U_{t-1} m_{t-1}V_{t-1}^\top - \nabla f(w_{t-1})\right\|^2
+
\alpha R_t \nonumber\\
&+
\frac{5-\beta^2}{1-\beta^2}\,L^2\,\mathbb{E}\|w_t-w_{t-1}\|^2
+
2(1-\beta)^2\sigma_t^2
+
\frac{2(1-\beta)}{1+\beta}\Delta_t,
\label{eq:refresh_final_with_R}
\end{align}
where we define the refresh mismatch term (kept for later analysis) as
\begin{equation}
R_t
:=
\mathbb{E}\left\|U_t m_{t-1}V_t^\top - U_{t-1} m_{t-1}V_{t-1}^\top\right\|^2.
\label{eq:Rt_def_new}
\end{equation}
\begin{equation}
\alpha ~:=~ 1+\frac{8}{1-\beta^2}.
\label{eq:alpha_def}
\end{equation}

\paragraph{Step 3: A unified recursion for all steps.}
If $t$ is not a refresh step, then $U_t=U_{t-1}$ and $V_t=V_{t-1}$, hence
\begin{equation}
\left\|U_t m_{t-1}V_t^\top - U_{t-1} m_{t-1}V_{t-1}^\top\right\|^2 = 0,
\label{eq:Rt_zero_no_refresh}
\end{equation}
and from \eqref{eq:part2_reused} we also have the stronger contraction
$\left(1-\frac{1-\beta^2}{4}\right)\le \left(1-\frac{1-\beta^2}{8}\right)$.
Therefore, combining both cases yields the unified bound
\begin{align}
E_t
\le\;&
\left(1-\frac{1-\beta^2}{8}\right)
\mathbb{E}\left\|U_{t-1} m_{t-1}V_{t-1}^\top - \nabla f(w_{t-1})\right\|^2
+
\alpha\,\mathbb{I}_{\mathrm{refresh}}\cdot R_t
 \nonumber\\
&+
\frac{5-\beta^2}{1-\beta^2}\,L^2\,\mathbb{E}\|w_t-w_{t-1}\|^2
+
2(1-\beta)^2\sigma_t^2
+
\frac{2(1-\beta)}{1+\beta}\Delta_t.
\label{eq:unified_recursion_part3}
\end{align}
We will keep the term $\mathbb{I}_{\mathrm{refresh}}\cdot R_t$ as is for now and discuss how to further control it later.

\subsection{Part 4: Summing the recursion via a geometric-series lemma}
\label{subsec:proof_part4}

We start from the unified one-step recursion in \eqref{eq:unified_recursion_part3}:
\begin{align}
E_t
\le\;&
\left(1-\rho\right) E_{t-1}
+
\alpha\,\mathbb{I}_{\mathrm{refresh}}\cdot R_t
+
A\,L^2\,\mathbb{E}\|w_t-w_{t-1}\|^2
+
B\,\sigma_t^2
+
C\,\Delta_t,
\label{eq:recursion_part4}
\end{align}
where we denote
\begin{equation}
\rho := \frac{1-\beta^2}{8},\qquad
A := \frac{5-\beta^2}{1-\beta^2},\qquad
B := 2(1-\beta)^2,\qquad
C := \frac{2(1-\beta)}{1+\beta},
\label{eq:coeffs_part4}
\end{equation}
and $E_t:=\mathbb{E}\|U_t m_t V_t^\top - \nabla f(w_t)\|^2$.

\begin{lemma}[Summation of a linear contraction recursion]
\label{lem:recursion_sum_part4}
Let $\{a_t\}_{t\ge 0}$ be a nonnegative sequence satisfying, for some $\rho\in(0,1)$,
\begin{equation}
a_t \le (1-\rho)a_{t-1} + b_t,\qquad \forall t\ge 1,
\label{eq:lemma_recursion_part4}
\end{equation}
where $b_t\ge 0$ for all $t$. Then for any integer $T\ge 1$,
\begin{equation}
\sum_{t=0}^{T-1} a_t
\le
\frac{1}{\rho}\,a_0
+
\frac{1}{\rho}\sum_{t=1}^{T-1} b_t.
\label{eq:lemma_sum_result_part4}
\end{equation}
\end{lemma}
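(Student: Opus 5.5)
The plan is to unroll the recursion into a closed form and then sum it by exchanging the order of summation. Iterating \eqref{eq:lemma_recursion_part4} and using $1-\rho\in(0,1)$, an induction on $t$ gives, for every $t\ge 0$,
\begin{equation*}
a_t \;\le\; (1-\rho)^t a_0 + \sum_{s=1}^{t} (1-\rho)^{t-s} b_s .
\end{equation*}
The base case $t=0$ is trivial. The inductive step multiplies the hypothesis for $t-1$ by $1-\rho\ge 0$, which preserves the inequality, and then adds $b_t$.

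Next, sum this over $t=0,\dots,T-1$ and treat the two pieces separately.

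For the initial-condition piece, the geometric series gives
\begin{equation*}
\sum_{t=0}^{T-1}(1-\rho)^t a_0 \le a_0\sum_{k\ge 0}(1-\rho)^k = \frac{a_0}{\rho},
\end{equation*}
using $a_0\ge 0$.

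For the forcing piece, swap the double sum to $\sum_{s=1}^{T-1} b_s \sum_{t=s}^{T-1}(1-\rho)^{t-s}$. Since $b_s\ge 0$, each inner sum is bounded by the full geometric series $1/\rho$. This yields $\frac{1}{\rho}\sum_{s=1}^{T-1} b_s$, and adding the two pieces gives exactly \eqref{eq:lemma_sum_result_part4}.

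There is no real obstacle here. The only point needing care is the index bookkeeping in the exchange of summation order: the pair $(s,t)$ ranges over $1\le s\le t\le T-1$, so $b_s$ is only summed up to $T-1$. Nonnegativity of $a_0$ and $b_s$ is what justifies replacing the finite geometric sums by their infinite limit $1/\rho$.

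An equivalent route avoids the closed form altogether. Sum \eqref{eq:lemma_recursion_part4} over $t=1,\dots,T-1$ to get
\begin{equation*}
S - a_0 \le (1-\rho)\,(S - a_{T-1}) + \sum_{t=1}^{T-1} b_t, \qquad S:=\sum_{t=0}^{T-1}a_t .
\end{equation*}
Dropping $-(1-\rho)a_{T-1}\le 0$ and rearranging gives $\rho S \le a_0 + \sum_{t=1}^{T-1} b_t$, the same bound. I would present this telescoping version, since it is shorter.
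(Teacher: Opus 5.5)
Your proof is correct. The first route you write out is exactly the paper's proof: unroll to $a_t\le(1-\rho)^t a_0+\sum_{s=1}^{t}(1-\rho)^{t-s}b_s$, bound the finite geometric sums by $1/\rho$ using nonnegativity, and exchange the order of summation.

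The telescoping version you say you would present is a genuinely different argument. It sums the hypothesis once, uses the index shift $\sum_{t=1}^{T-1}a_{t-1}=S-a_{T-1}$, and then needs only $a_{T-1}\ge 0$ and $0<\rho<1$. It involves no closed form, no double sum and no index bookkeeping, and it never uses $b_t\ge 0$. It even keeps the slightly sharper bound $\rho S\le a_0+\sum_{t=1}^{T-1}b_t-(1-\rho)a_{T-1}$.

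The paper's unrolling route needs $a_0\ge 0$ and $b_s\ge 0$ to pass to the infinite geometric series. In exchange, it gives a pointwise bound on each $a_t$, with the explicit transient $(1-\rho)^t a_0$, which would be the right tool for a last-iterate statement. The convergence analysis, however, only uses the summed bound on $\sum_t E_t$, so your shorter telescoping proof covers every use the lemma is put to.
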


\begin{proof}
Unrolling \eqref{eq:lemma_recursion_part4} yields, for $t\ge 1$,
\begin{equation}
a_t \le (1-\rho)^t a_0 + \sum_{k=1}^{t}(1-\rho)^{t-k} b_k.
\label{eq:lemma_unroll_part4}
\end{equation}
Summing \eqref{eq:lemma_unroll_part4} over $t=1,\dots,T-1$ and adding $a_0$ gives
\begin{align}
\sum_{t=0}^{T-1} a_t
&\le a_0 + a_0\sum_{t=1}^{T-1}(1-\rho)^t
+\sum_{t=1}^{T-1}\sum_{k=1}^{t}(1-\rho)^{t-k} b_k \nonumber\\
&\le a_0\sum_{t=0}^{\infty}(1-\rho)^t
+\sum_{k=1}^{T-1} b_k \sum_{j=0}^{\infty}(1-\rho)^j \nonumber\\
&= \frac{1}{\rho}a_0 + \frac{1}{\rho}\sum_{k=1}^{T-1} b_k,
\end{align}
which proves \eqref{eq:lemma_sum_result_part4}.
\end{proof}

\paragraph{Apply Lemma~\ref{lem:recursion_sum_part4} to \eqref{eq:recursion_part4}.}
We match Lemma~\ref{lem:recursion_sum_part4} by setting
\begin{equation}
a_t := E_t,
\qquad
b_t := \alpha\mathbb{I}_{\mathrm{refresh}}\cdot R_t
+
A\,L^2\,\mathbb{E}\|w_t-w_{t-1}\|^2
+
B\,\sigma_t^2
+
C\,\Delta_t,
\qquad (t\ge 1),
\label{eq:bt_def_part4}
\end{equation}
and $\rho=\frac{1-\beta^2}{8}$ as in \eqref{eq:coeffs_part4}.
Since all terms on the right-hand side are nonnegative, the conditions of Lemma~\ref{lem:recursion_sum_part4} hold.

Therefore, for any $T\ge 1$,
\begin{align}
\sum_{t=0}^{T-1} E_t
\le\;&
\frac{1}{\rho} E_0
+
\frac{1}{\rho}
\sum_{t=1}^{T-1}
\left(
\alpha\mathbb{I}_{\mathrm{refresh}}\cdot R_t
+
A\,L^2\,\mathbb{E}\|w_t-w_{t-1}\|^2
+
B\,\sigma_t^2
+
C\,\Delta_t
\right).
\label{eq:sum_Et_before_simplify}
\end{align}

\paragraph{Simplify the coefficients (explicitly).}
Using $\rho=\frac{1-\beta^2}{8}$, we have $\frac{1}{\rho}=\frac{8}{1-\beta^2}$.
Then each term in \eqref{eq:sum_Et_before_simplify} becomes:
\begin{align}
\frac{1}{\rho}\cdot A
&=
\frac{8}{1-\beta^2}\cdot \frac{5-\beta^2}{1-\beta^2}
=
\frac{8(5-\beta^2)}{(1-\beta^2)^2},
\label{eq:coef_AL2_part4}\\
\frac{1}{\rho}\cdot B
&=
\frac{8}{1-\beta^2}\cdot 2(1-\beta)^2
=
\frac{16(1-\beta)^2}{(1-\beta)(1+\beta)}
=
\frac{16(1-\beta)}{1+\beta},
\label{eq:coef_sigma_part4}\\
\frac{1}{\rho}\cdot C
&=
\frac{8}{1-\beta^2}\cdot \frac{2(1-\beta)}{1+\beta}
=
\frac{16(1-\beta)}{(1-\beta)(1+\beta)^2}
=
\frac{16}{(1+\beta)^2},
\label{eq:coef_Delta_part4}\\
\frac{1}{\rho}\sum_{t=1}^{T-1}\alpha\mathbb{I}_{\mathrm{refresh}}R_t
&=
\frac{8}{1-\beta^2}\sum_{t=1}^{T-1}\alpha\mathbb{I}_{\mathrm{refresh}}R_t.
\label{eq:coef_R_part4}
\end{align}

Substituting \eqref{eq:coef_AL2_part4}--\eqref{eq:coef_R_part4} into \eqref{eq:sum_Et_before_simplify}, we obtain
\begin{align}
\sum_{t=0}^{T-1} \mathbb{E}\left\|U_t m_t V_t^\top - \nabla f(w_t)\right\|^2
\le\;&
\frac{8}{1-\beta^2}\,
\mathbb{E}\left\|U_0 m_0 V_0^\top - \nabla f(w_0)\right\|^2
\label{eq:part4_sum_final}\\
&+
\frac{8}{1-\beta^2}\sum_{t=1}^{T-1}\alpha\mathbb{I}_{\mathrm{refresh}}R_t \nonumber\\
&+
\frac{8(5-\beta^2)L^2}{(1-\beta^2)^2}\sum_{t=1}^{T-1}\mathbb{E}\|w_t-w_{t-1}\|^2 \nonumber\\
&+
\frac{16(1-\beta)}{1+\beta}\sum_{t=1}^{T-1}\sigma_t^2
+
\frac{16}{(1+\beta)^2}\sum_{t=1}^{T-1}\Delta_t. \nonumber
\end{align}
We keep the refresh mismatch contribution $\sum \mathbb{I}_{\mathrm{refresh}}R_t$ for later discussion.

\subsection{
\texorpdfstring
  {Part 5: Descent lemma and the final stationarity bound (keeping $R_t$)}
  {Part 5: Descent lemma and the final stationarity bound (keeping Rt)}
}
\label{subsec:proof_part5}

\paragraph{Assumption (smoothness).}
Assume $f$ is $L$-smooth, i.e., for all $x,y$,
\begin{equation}
f(y)\le f(x)+\langle \nabla f(x),y-x\rangle +\frac{L}{2}\|y-x\|^2.
\label{eq:smoothness_assump_part5}
\end{equation}

\paragraph{Update rule.}
Consider the update
\begin{equation}
w_{t+1}=w_t-\eta\,\widetilde{m}_t,
\qquad\text{where}\qquad
\widetilde{m}_t:=U_t m_t V_t^\top.
\label{eq:update_rule_part5}
\end{equation}

\begin{lemma}[Descent lemma for $w_{t+1}=w_t-\eta\widetilde{m}_t$]
\label{lem:descent_part5}
Under \eqref{eq:smoothness_assump_part5} and \eqref{eq:update_rule_part5}, for any $t\ge 0$,
\begin{align}
f(w_{t+1})
\le\;&
f(w_t)
-\frac{\eta}{2}\|\nabla f(w_t)\|^2
-\left(\frac{1}{2\eta}-\frac{L}{2}\right)\|w_{t+1}-w_t\|^2
+\frac{\eta}{2}\|\widetilde{m}_t-\nabla f(w_t)\|^2.
\label{eq:descent_lemma_part5}
\end{align}
\end{lemma}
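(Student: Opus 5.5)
The plan is to start from the $L$-smoothness inequality \eqref{eq:smoothness_assump_part5} with $x=w_t$ and $y=w_{t+1}$, and then use the update \eqref{eq:update_rule_part5}, which gives $w_{t+1}-w_t=-\eta\widetilde m_t$. This yields
\begin{equation*}
f(w_{t+1})\le f(w_t)-\eta\langle \nabla f(w_t),\widetilde m_t\rangle+\frac{L}{2}\|w_{t+1}-w_t\|^2 .
\end{equation*}
All the work is in rewriting the inner product $-\eta\langle \nabla f(w_t),\widetilde m_t\rangle$ so that it produces the three quadratic terms in \eqref{eq:descent_lemma_part5}.

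For that, I will apply the polarization identity $-2\langle a,b\rangle=\|a-b\|^2-\|a\|^2-\|b\|^2$ with $a=\nabla f(w_t)$ and $b=\widetilde m_t$. This gives
\begin{equation*}
-\eta\langle \nabla f(w_t),\widetilde m_t\rangle=\frac{\eta}{2}\|\widetilde m_t-\nabla f(w_t)\|^2-\frac{\eta}{2}\|\nabla f(w_t)\|^2-\frac{\eta}{2}\|\widetilde m_t\|^2 .
\end{equation*}
Next, I use $\|\widetilde m_t\|^2=\|w_{t+1}-w_t\|^2/\eta^2$ to rewrite the last term:
\begin{equation*}
-\frac{\eta}{2}\|\widetilde m_t\|^2=-\frac{1}{2\eta}\|w_{t+1}-w_t\|^2 .
\end{equation*}
Combining this with the $\frac{L}{2}\|w_{t+1}-w_t\|^2$ term gives the coefficient $-\left(\frac{1}{2\eta}-\frac{L}{2}\right)$, and the stated inequality follows.

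The argument is deterministic and holds pathwise, so no expectation or conditioning is needed here. It also uses no property of $U_t$, $V_t$, or refresh steps, since $\widetilde m_t$ enters only as an arbitrary matrix.

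There is no real obstacle. The only point needing care is the sign bookkeeping in the polarization identity. I will verify it by expanding $\|a-b\|^2=\|a\|^2-2\langle a,b\rangle+\|b\|^2$.
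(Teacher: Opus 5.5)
Your proof is correct, and it is the tight version of the argument. With the exact polarization identity, the only inequality used is $L$-smoothness. The $-\frac{\eta}{2}\|\widetilde m_t\|^2$ term you retain is precisely what becomes $-\frac{1}{2\eta}\|w_{t+1}-w_t\|^2$.

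The paper's main displayed chain goes differently, and it does not close. It splits $\widetilde m_t=\nabla f(w_t)+(\widetilde m_t-\nabla f(w_t))$ and bounds the cross term with Young's inequality $-\langle a,b\rangle\le\frac12\|a\|^2+\frac12\|b\|^2$. That step is lossy by exactly $\frac{\eta}{2}\|\widetilde m_t\|^2=\frac{1}{2\eta}\|w_{t+1}-w_t\|^2$, so it only reaches
\[
f(w_{t+1})\le f(w_t)-\tfrac{\eta}{2}\|\nabla f(w_t)\|^2+\tfrac{\eta}{2}\|\widetilde m_t-\nabla f(w_t)\|^2+\tfrac{L}{2}\|w_{t+1}-w_t\|^2 ,
\]
which is strictly weaker than the lemma. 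The subsequent ``rewrite'' of $\frac{L}{2}\|w_{t+1}-w_t\|^2$ is an identity. It leaves a surplus $+\frac{\eta}{2}\|\widetilde m_t\|^2$ on the right-hand side, which cannot be discarded in an upper bound. So that chain does not actually deliver the coefficient $-\bigl(\frac{1}{2\eta}-\frac{L}{2}\bigr)$.

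The paper's parenthetical alternative (expand $\|\widetilde m_t\|^2$ directly and rearrange) is your argument, and it is the one that works. The distinction matters downstream. In Part 5, the negative coefficient $-\bigl(\frac{1}{\eta^2}-\frac{L}{\eta}\bigr)$ on $\sum_t\mathbb{E}\|w_{t+1}-w_t\|^2$ absorbs the $\frac{8(5-\beta^2)L^2}{(1-\beta^2)^2}$ term coming from the tracking-error sum. With only the weaker Young-based bound, that coefficient would be $+\frac{L}{\eta}$ and the step-difference term could not be dropped.
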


\begin{proof}
Apply $L$-smoothness \eqref{eq:smoothness_assump_part5} with $x=w_t$ and $y=w_{t+1}$:
\begin{align}
f(w_{t+1})
&\le f(w_t)+\langle \nabla f(w_t),w_{t+1}-w_t\rangle +\frac{L}{2}\|w_{t+1}-w_t\|^2.
\label{eq:descent_pf_1}
\end{align}
Using \eqref{eq:update_rule_part5}, we have $w_{t+1}-w_t=-\eta\widetilde{m}_t$, hence
\begin{align}
\langle \nabla f(w_t),w_{t+1}-w_t\rangle
&=
-\eta\langle \nabla f(w_t),\widetilde{m}_t\rangle
=
-\eta\left\langle \nabla f(w_t), \nabla f(w_t)+(\widetilde{m}_t-\nabla f(w_t))\right\rangle \nonumber\\
&=
-\eta\|\nabla f(w_t)\|^2-\eta\langle \nabla f(w_t),\widetilde{m}_t-\nabla f(w_t)\rangle.
\label{eq:descent_pf_2}
\end{align}
Apply Young's inequality $-\langle a,b\rangle \le \frac{1}{2}\|a\|^2+\frac{1}{2}\|b\|^2$ to the last term in
\eqref{eq:descent_pf_2} with $a=\nabla f(w_t)$ and $b=\widetilde{m}_t-\nabla f(w_t)$:
\begin{equation}
-\eta\langle \nabla f(w_t),\widetilde{m}_t-\nabla f(w_t)\rangle
\le
\frac{\eta}{2}\|\nabla f(w_t)\|^2+\frac{\eta}{2}\|\widetilde{m}_t-\nabla f(w_t)\|^2.
\label{eq:descent_pf_3}
\end{equation}
Combining \eqref{eq:descent_pf_1}--\eqref{eq:descent_pf_3} gives
\begin{align}
f(w_{t+1})
&\le
f(w_t)
-\eta\|\nabla f(w_t)\|^2
+\frac{\eta}{2}\|\nabla f(w_t)\|^2
+\frac{\eta}{2}\|\widetilde{m}_t-\nabla f(w_t)\|^2
+\frac{L}{2}\|w_{t+1}-w_t\|^2 \nonumber\\
&=
f(w_t)
-\frac{\eta}{2}\|\nabla f(w_t)\|^2
+\frac{\eta}{2}\|\widetilde{m}_t-\nabla f(w_t)\|^2
+\frac{L}{2}\|w_{t+1}-w_t\|^2.
\label{eq:descent_pf_4}
\end{align}
Finally, rewrite $\frac{L}{2}\|w_{t+1}-w_t\|^2
= -\left(\frac{1}{2\eta}-\frac{L}{2}\right)\|w_{t+1}-w_t\|^2 + \frac{1}{2\eta}\|w_{t+1}-w_t\|^2$,
and note that $\frac{1}{2\eta}\|w_{t+1}-w_t\|^2=\frac{\eta}{2}\|\widetilde{m}_t\|^2$.
This yields \eqref{eq:descent_lemma_part5}. (Equivalently, one can directly expand
$\|\widetilde{m}_t\|^2=\|\nabla f(w_t)+(\widetilde{m}_t-\nabla f(w_t))\|^2$ and rearrange.)
\end{proof}

\paragraph{Summing Lemma~\ref{lem:descent_part5}.}
Taking expectation in \eqref{eq:descent_lemma_part5} and summing from $t=0$ to $T-1$ yields
\begin{align}
\frac{\eta}{2}\sum_{t=0}^{T-1}\mathbb{E}\|\nabla f(w_t)\|^2
\le\;&
\mathbb{E}\big[f(w_0)-f(w_T)\big]
+\frac{\eta}{2}\sum_{t=0}^{T-1}\mathbb{E}\|\widetilde{m}_t-\nabla f(w_t)\|^2\\
-\left(\frac{1}{2\eta}-\frac{L}{2}\right)\sum_{t=0}^{T-1}\mathbb{E}\|w_{t+1}-w_t\|^2.
\label{eq:sum_descent_part5}
\end{align}
Define $\Delta f:=\mathbb{E}[f(w_0)-f^\star]$ ($f^\star$ is a lower bound).
Multiply both sides of \eqref{eq:sum_descent_part5} by $\frac{2}{\eta}$ to obtain
\begin{align}
\sum_{t=0}^{T-1}\mathbb{E}\|\nabla f(w_t)\|^2
\le\;&
\frac{2\Delta f}{\eta}
+\sum_{t=0}^{T-1}\mathbb{E}\|\widetilde{m}_t-\nabla f(w_t)\|^2
-\left(\frac{1}{\eta^2}-\frac{L}{\eta}\right)\sum_{t=0}^{T-1}\mathbb{E}\|w_{t+1}-w_t\|^2.
\label{eq:grad_sum_basic_part5}
\end{align}

\paragraph{Substitute the bound from Part 4 (keeping $\sum \mathbb{I}_{\mathrm{refresh}}R_t$).}
From Part 4 (Eq.~\eqref{eq:part4_sum_final}), we have
\begin{align}
\sum_{t=0}^{T-1}\mathbb{E}\|\widetilde{m}_t-\nabla f(w_t)\|^2
\le\;&
\frac{8}{1-\beta^2}\,\mathbb{E}\|\widetilde{m}_0-\nabla f(w_0)\|^2
+
\frac{8}{1-\beta^2}\sum_{t=1}^{T-1}\alpha\mathbb{I}_{\mathrm{refresh}}R_t \nonumber\\
&+
\frac{8(5-\beta^2)L^2}{(1-\beta^2)^2}\sum_{t=1}^{T-1}\mathbb{E}\|w_t-w_{t-1}\|^2
+
\frac{16(1-\beta)}{1+\beta}\sum_{t=1}^{T-1}\sigma_t^2
+
\frac{16}{(1+\beta)^2}\sum_{t=1}^{T-1}\Delta_t.
\label{eq:plug_part4_part5}
\end{align}
Plugging \eqref{eq:plug_part4_part5} into \eqref{eq:grad_sum_basic_part5} yields
\begin{align}
\sum_{t=0}^{T-1}\mathbb{E}\|\nabla f(w_t)\|^2
\le\;&
\frac{2\Delta f}{\eta}
+\frac{8}{1-\beta^2}\,\mathbb{E}\|\widetilde{m}_0-\nabla f(w_0)\|^2
+\frac{8}{1-\beta^2}\sum_{t=1}^{T-1}\alpha\mathbb{I}_{\mathrm{refresh}}R_t \nonumber\\
&+\frac{16(1-\beta)}{1+\beta}\sum_{t=1}^{T-1}\sigma_t^2
+\frac{16}{(1+\beta)^2}\sum_{t=1}^{T-1}\Delta_t \nonumber\\
&+\left(
\frac{8(5-\beta^2)L^2}{(1-\beta^2)^2}
-\left(\frac{1}{\eta^2}-\frac{L}{\eta}\right)
\right)\sum_{t=0}^{T-1}\mathbb{E}\|w_{t+1}-w_t\|^2,
\label{eq:grad_sum_with_steps_part5}
\end{align}
where we aligned indices by the trivial relabeling $t\mapsto t+1$.

\paragraph{A sufficient condition to drop the step-difference term and a concrete parameter choice.}
Recall that the coefficient of $\sum_{t=0}^{T-1}\mathbb{E}\|w_{t+1}-w_t\|^2$ in \eqref{eq:grad_sum_with_steps_part5} is
\begin{equation}
\Gamma
~:=~
\frac{8(5-\beta^2)L^2}{(1-\beta^2)^2}
-\left(\frac{1}{\eta^2}-\frac{L}{\eta}\right).
\label{eq:Gamma_def_part5_replace}
\end{equation}
We aim to enforce $\Gamma\le 0$ so that this entire term is non-positive and can be dropped.

\textbf{Step 1 (remove the dependence on $5-\beta^2$).}
Since $5-\beta^2\le 5$, we have
\begin{equation}
\frac{8(5-\beta^2)L^2}{(1-\beta^2)^2}\le \frac{40L^2}{(1-\beta^2)^2}.
\label{eq:bound_5_beta_replace}
\end{equation}

\textbf{Step 2 (couple $\beta$ with $\eta$ to dominate the $L^2$ term).}
Assume
\begin{equation}
(1-\beta^2)^2 \ge 40L\eta.
\label{eq:beta_eta_coupling_replace}
\end{equation}
Then
\begin{equation}
\frac{40L^2}{(1-\beta^2)^2}\le \frac{L}{\eta},
\label{eq:40_term_to_L_over_eta_replace}
\end{equation}
and thus by \eqref{eq:bound_5_beta_replace},
\begin{equation}
\frac{8(5-\beta^2)L^2}{(1-\beta^2)^2}\le \frac{L}{\eta}.
\label{eq:main_coeff_to_L_over_eta_replace}
\end{equation}

\textbf{Step 3 (ensure the remaining part is non-positive).}
Assume also $\eta\le \frac{1}{2L}$. Then
\begin{equation}
\frac{1}{\eta^2}-\frac{L}{\eta} \ge \frac{1}{\eta^2}-\frac{2L}{\eta}\ge 0.
\label{eq:eta_cond_replace}
\end{equation}
Combining \eqref{eq:main_coeff_to_L_over_eta_replace} and \eqref{eq:eta_cond_replace} yields
\begin{align}
\Gamma
&\le
\frac{L}{\eta}-\left(\frac{1}{\eta^2}-\frac{L}{\eta}\right)
=
-\left(\frac{1}{\eta^2}-\frac{2L}{\eta}\right)
\le 0,
\label{eq:Gamma_nonpos_replace}
\end{align}
so the step-difference term in \eqref{eq:grad_sum_with_steps_part5} can be dropped under
\begin{equation}
\eta\le \frac{1}{2L},
\qquad\text{and}\qquad
(1-\beta^2)^2 \ge 40L\eta.
\label{eq:drop_stepsize_beta_replace}
\end{equation}

\paragraph{Final stationarity bound (keeping $R_t$).}
Under \eqref{eq:drop_stepsize_beta_replace}, dividing \eqref{eq:grad_sum_with_steps_part5} by $T$ gives
\begin{align}
\frac{1}{T}\sum_{t=0}^{T-1}\mathbb{E}\|\nabla f(w_t)\|^2
\le\;&
\frac{2\Delta f}{T\eta}
+\frac{8}{(1-\beta^2)T}\,\mathbb{E}\|\widetilde{m}_0-\nabla f(w_0)\|^2
+\frac{8}{(1-\beta^2)T}\sum_{t=1}^{T-1}\alpha\mathbb{I}_{\mathrm{refresh}}R_t \nonumber\\
&+\frac{16(1-\beta)}{(1+\beta)T}\sum_{t=1}^{T-1}\sigma_t^2
+\frac{16}{(1+\beta)^2T}\sum_{t=1}^{T-1}\Delta_t.
\label{eq:final_avg_grad_bound_part5_replace}
\end{align}
We will next discuss how to control $\sum_{t}\mathbb{I}_{\mathrm{refresh}}R_t$.

\paragraph{Eliminating $\eta$ by a concrete choice of $(\eta,\beta)$.}
To express the bound in terms of $T$, we choose $\beta$ as a function of $\eta$ by saturating
\eqref{eq:beta_eta_coupling_replace}:
\begin{equation}
1-\beta^2 = \sqrt{40L\eta},
\qquad\text{i.e.,}\qquad
\beta^2 = 1-\sqrt{40L\eta}.
\label{eq:beta_choice_replace}
\end{equation}
Moreover, for $\beta\in[0,1]$,
\begin{equation}
1-\beta ~=~ \frac{1-\beta^2}{1+\beta}\le \frac{1-\beta^2}{2}
~=~ \frac{\sqrt{40L\eta}}{2}
~=~ \sqrt{10L\eta}.
\label{eq:one_minus_beta_bound_replace}
\end{equation}

Next, we pick $\eta$ to balance the $T$-dependent terms $\frac{1}{T\eta}$ and $\sqrt{\eta}$, i.e.,
$\frac{1}{T\eta}\asymp \sqrt{\eta}$, which gives $\eta\asymp T^{-2/3}$.
A concrete choice is
\begin{equation}
\eta = \frac{1}{L\,T^{2/3}}.
\label{eq:eta_choice_replace}
\end{equation}
Then
\begin{equation}
\frac{1}{T\eta}=\frac{L}{T^{1/3}},
\qquad\text{and}\qquad
\sqrt{L\eta}=\frac{1}{T^{1/3}}.
\label{eq:eta_scaling_replace}
\end{equation}


\paragraph{An explicit $T$-rate under bounded $\sigma_t^2$, $\Delta_t$, and controlled refresh drift.}
Assume the per-step quantities are uniformly bounded:
\begin{equation}
\sigma_t^2 \le \sigma^2,\qquad \Delta_t \le \Delta,\qquad \forall t.
\label{eq:bounded_sigma_delta}
\end{equation}

Recall from \eqref{eq:final_avg_grad_bound_part5_replace} that the refresh contribution appears as
\begin{equation}
\frac{8\alpha}{(1-\beta^2)T}\sum_{t=1}^{T-1}\mathbb{I}_{\mathrm{refresh}}\,\mathbb{E}R_t,
\label{eq:R_contrib_form_part5}
\end{equation}
where (as in Part 3--4) $\alpha:=1+\frac{8}{1-\beta^2}$ and
\begin{equation}
R_t
:=
\left\|U_t m_{t-1}V_t^\top - U_{t-1} m_{t-1}V_{t-1}^\top\right\|^2.
\label{eq:Rt_def_new_part5}
\end{equation}

\paragraph{Assumption (refresh alignment).}
On refresh steps ($t\bmod K=0$), assume that the lifted core moment is represented by projecting a common lifted matrix
$\widetilde m_{t-1}:=U_{t-1}m_{t-1}V_{t-1}^\top$ onto the refreshed subspaces, i.e.,
\begin{equation}
U_t m_{t-1}V_t^\top
=
(U_tU_t^\top)\,\widetilde m_{t-1}\,(V_tV_t^\top).
\label{eq:refresh_alignment_assump}
\end{equation}
(Equivalently, the core $m_{t-1}$ is re-expressed so that the lifted quantity equals the doubly-projected matrix.)

\paragraph{Assumption (second-order projection drift at refresh).}
On refresh steps ($t\bmod K=0$), the projector drift is controlled as
\begin{equation}
\left\|U_tU_t^\top-U_{t-1}U_{t-1}^\top\right\|_2
\le \kappa_U(1-\beta^2)^2,
\qquad
\left\|V_tV_t^\top-V_{t-1}V_{t-1}^\top\right\|_2
\le \kappa_V(1-\beta^2)^2,
\label{eq:proj_drift_assump_B}
\end{equation}
for some constants $\kappa_U,\kappa_V\ge 0$.

\paragraph{Assumption (bounded lifted moment).}
Assume the lifted moment is uniformly bounded:
\begin{equation}
\mathbb{E}\big\|\widetilde m_{t}\big\|_F^2 \le M^2,\qquad \forall t.
\label{eq:lifted_momentum_bound_B}
\end{equation}

\paragraph{Bounding the refresh mismatch term.}
Let $P_t^U:=U_tU_t^\top$ and $P_t^V:=V_tV_t^\top$. On a refresh step, by
\eqref{eq:refresh_alignment_assump} and the definition of $\widetilde m_{t-1}$,
\begin{align}
U_t m_{t-1}V_t^\top - U_{t-1} m_{t-1}V_{t-1}^\top
&= P_t^U \widetilde m_{t-1} P_t^V - P_{t-1}^U \widetilde m_{t-1} P_{t-1}^V \nonumber\\
&= (P_t^U-P_{t-1}^U)\widetilde m_{t-1} P_t^V
+ P_{t-1}^U \widetilde m_{t-1}(P_t^V-P_{t-1}^V).
\label{eq:Rt_decomp_B}
\end{align}
Using $(a+b)^2\le 2a^2+2b^2$ and $\|A B C\|_F\le \|A\|_2\|B\|_F\|C\|_2$, and noting $\|P_t^U\|_2=\|P_t^V\|_2=1$,
we obtain
\begin{align}
R_t
&\le 2\left\|(P_t^U-P_{t-1}^U)\widetilde m_{t-1} P_t^V\right\|_F^2
   +2\left\|P_{t-1}^U \widetilde m_{t-1}(P_t^V-P_{t-1}^V)\right\|_F^2 \nonumber\\
&\le 2\left\|P_t^U-P_{t-1}^U\right\|_2^2 \left\|\widetilde m_{t-1}\right\|_F^2
   +2\left\|P_t^V-P_{t-1}^V\right\|_2^2 \left\|\widetilde m_{t-1}\right\|_F^2.
\label{eq:Rt_bound_intermediate_B}
\end{align}
Taking expectation and applying \eqref{eq:proj_drift_assump_B}--\eqref{eq:lifted_momentum_bound_B} yields, on refresh steps,
\begin{equation}
\mathbb{E}R_t
\le
2(\kappa_U^2+\kappa_V^2)(1-\beta^2)^4\,M^2.
\label{eq:Rt_bound_B_final}
\end{equation}

\paragraph{Summing over refresh steps.}
Since refresh happens every $K$ steps, we have $\sum_{t=1}^{T-1}\mathbb{I}_{\mathrm{refresh}}\le \left\lceil\frac{T}{K}\right\rceil \le \frac{T}{K}+1$.
Therefore,
\begin{align}
\frac{8\alpha}{(1-\beta^2)T}\sum_{t=1}^{T-1}\mathbb{I}_{\mathrm{refresh}}\,\mathbb{E}R_t
&\le
\frac{8\alpha}{(1-\beta^2)T}\left(\frac{T}{K}+1\right)\cdot 2(\kappa_U^2+\kappa_V^2)(1-\beta^2)^4M^2 \nonumber\\
&\le
\frac{16\alpha(\kappa_U^2+\kappa_V^2)M^2}{K}\,(1-\beta^2)^3
\;+\;
\frac{16\alpha(\kappa_U^2+\kappa_V^2)M^2}{T}\,(1-\beta^2)^3.
\label{eq:R_sum_control_B}
\end{align}
In particular, the dominant refresh contribution scales as $\frac{\alpha}{K}(1-\beta^2)^3$.

\paragraph{Closed-form bound (with the concrete choice of $(\eta,\beta)$).}
With \eqref{eq:eta_choice_replace} and \eqref{eq:beta_choice_replace}, we have
$1-\beta^2 = \sqrt{40L\eta}=\Theta(T^{-1/3})$, and hence
\begin{equation}
\alpha = 1+\frac{8}{1-\beta^2} = \Theta(T^{1/3}),
\qquad
\alpha(1-\beta^2)^3=\Theta(T^{-2/3}).
\label{eq:alpha_scaling_B}
\end{equation}
Substituting \eqref{eq:bounded_sigma_delta} and \eqref{eq:R_sum_control_B} into \eqref{eq:final_avg_grad_bound_part5_replace} yields
\begin{align}
\frac{1}{T}\sum_{t=0}^{T-1}\mathbb{E}\|\nabla f(w_t)\|^2
\le\;&
\frac{2\Delta f}{T\eta}
+\frac{8}{(1-\beta^2)T}\,\mathbb{E}\|\widetilde{m}_0-\nabla f(w_0)\|^2
+\frac{16(1-\beta)}{1+\beta}\cdot \sigma^2
+\frac{16}{(1+\beta)^2}\cdot \Delta \nonumber\\
&+
\frac{16\alpha(\kappa_U^2+\kappa_V^2)M^2}{K}\,(1-\beta^2)^3
\;+\;
\frac{16\alpha(\kappa_U^2+\kappa_V^2)M^2}{T}\,(1-\beta^2)^3.
\label{eq:final_rate_closed_form_B}
\end{align}
Using \eqref{eq:eta_choice_replace}--\eqref{eq:eta_scaling_replace} and \eqref{eq:alpha_scaling_B}, the leading $T$-dependent
terms in \eqref{eq:final_rate_closed_form_B} scale as
\begin{equation}
\frac{2\Delta f}{T\eta} = O(T^{-1/3}),
\qquad
(1-\beta)=O(T^{-1/3}),
\qquad
\frac{\alpha}{K}(1-\beta^2)^3 = O\!\left(\frac{1}{K\,T^{2/3}}\right).
\label{eq:rates_summary_B}
\end{equation}
Therefore, up to constant factors, the stationarity measure satisfies
\begin{equation}
\frac{1}{T}\sum_{t=0}^{T-1}\mathbb{E}\|\nabla f(w_t)\|^2
=
O\!\left(\frac{1}{T^{1/3}}\right)
\;+\;
O(\Delta)
\;+\;
O\!\left(\frac{1}{K\,T^{2/3}}\right)
\;+\; \text{(transient terms)}.
\label{eq:final_rate_bigO_B}
\end{equation}

\begin{algorithm}[tb]
\caption{TSR-SGD (Momentum w/o weight decay)}
\label{alg:tsr-sgd}
\centering
\begin{minipage}{0.97\columnwidth}
\small
\begin{algorithmic}
\STATE {\bfseries Input:} rank $r_\ell$, refresh period $K_\ell$, oversampling $p$,
SGD hyperparams $(\eta,\beta)$
\STATE {\bfseries State:} bases $(U,V)$, core momentum $m$, step $t\leftarrow 1$
\STATE Set $r\leftarrow r_\ell$, $K\leftarrow K_\ell$, $k\leftarrow r+p$.
Initialize $m\leftarrow 0$, and $(U,V)$ by one refresh.
Let $\AR(\cdot)$ denote all-reduce averaging across workers.
\STATE (Algorithm corresponds to the update rule analyzed in Theorem~\ref{thm:tsr_main}.)

\REPEAT
  \STATE Each worker $i$ computes local gradient $G_{t,i}$

  \STATE \textcolor{gray}{// Randomized Subspace Refresh (identical to TSR-Adam)}
  \IF{$t\bmod K = 0$}
    \STATE Sample shared $\Omega$ (shared RNG seed)
    \STATE $Y_{t,i}\leftarrow G_{t,i}\Omega$
    \STATE $Q_{t,i} \leftarrow \mathrm{orth}(Y_{t,i})$
    \STATE $Y^{\mathrm{row}}_{t,i}\leftarrow G_{t,i}^\top Q_{t,i}$
    \STATE $Q_{t,i}^{\mathrm{row}} \leftarrow \mathrm{orth}(Y^{\mathrm{row}}_{t,i})$
    \STATE $Y_{t,i}\leftarrow G_{t,i}Q_{t,i}^{\mathrm{row}}$
    \STATE $Q_{t,i} \leftarrow \mathrm{orth}(Y_{t,i})$
    \STATE $B_{t,i}\leftarrow Q_{t,i}^\top G_{t,i},\quad B_t \leftarrow \AR(B_{t,i})$
    \STATE $Q_t \leftarrow \AR(Q_{t,i})$
    \STATE $B_t=\widetilde{U}\Sigma\widetilde{V}^\top$ \hfill (SVD on $k\times n$)
    \STATE $U\leftarrow Q\,\widetilde{U}_{[:,1:r]},\quad V\leftarrow \widetilde{V}_{[:,1:r]}$
  \ENDIF

  \STATE \textcolor{gray}{// Core Synchronization}
  \STATE $C_{t,i}\leftarrow U^\top G_{t,i} V,\quad C_t \leftarrow \AR(C_{t,i})$

  \STATE \textcolor{gray}{// Momentum SGD Update in Core Space}
  \STATE $m \leftarrow \beta m + (1-\beta)C_t$
  \STATE $D \leftarrow m$

  \STATE \textcolor{gray}{// Lift and Weight Update (no weight decay)}
  \STATE $\Delta W_t \leftarrow U D V^\top$
  \STATE $W \leftarrow W - \eta\,\Delta W_t$
  \STATE $t \leftarrow t+1$
\UNTIL{convergence criteria met}
\end{algorithmic}
\end{minipage}
\end{algorithm}

\clearpage
\section{More about Experiments}
\label{app:exp}
\subsection{Pre-Training Hyperparameters}
We introduce details of the LLaMA architecture and hyperparameters used for pre-training. Table 5 shows the most
hyperparameters of LLaMA models across model sizes.We use a max sequence length of 256 for all models,with a batch
size of 262K tokens for LLaMA-60M and LLaMA-130M , and a batch size of 131K tokens for LLaMA-350M and LLaMA-1B. For all experiments,we adopt learning rate warmup for the first 10\% of the training steps,and use
cosine annealing for the learning rate schedule, decaying to 10\% of the initial learning rate.For all model sizes, we use a learning rate of 0.01. For the LLaMA-60M model, we apply a scaling factor of 0.5, while the remaining models use a scaling factor of 0.75.
\begin{table*}[ht]
  \caption{Hyperparameters of LLaMA models for evaluation.}
  \label{apptab:hyperparameters}
  \begin{center}
    \begin{small}
      \begin{sc}
        \begin{tabular}{ccccccc}
          \toprule
          Params &Vocabulary &Hidden & Intermediate & Heads & Layers&Steps \\
          \midrule
          60M & 32000&512 & 1376 & 8 & 8& 20K\\
          130M & 32000&768 & 2048 & 12 & 12& 20K\\
          350M & 32000&1024 & 2736 & 16 & 24& 90K\\
          1B & 32000&52048 & 5461 & 32 & 24& 90K\\
          \bottomrule
        \end{tabular}
      \end{sc}
    \end{small}
  \end{center}
  \vskip -0.1in
\end{table*}
\subsection{More experiment results}
We present additional experimental results in Table 6.
\begin{table*}[th]
  \caption{More Experiment Results}
  \label{app_tab:hyper_llama}
  \begin{center}
  \setlength{\tabcolsep}{3pt}
    \begin{small}
      \begin{sc}
        \begin{tabular}{lccccccc}
          \toprule
          Scale & Method & Rank & $K$ & Final loss$\downarrow$ & Bytes/step$\downarrow$ & Peak Bytes &  Memory \\
          \midrule
          60M  & TSR & 128(64) & 200 & 3.82 & 0.008G & 0.05G & 0.14G  \\
          60M(40k steps)  & TSR & 256(64) & 100 & 3.56 & 0.020G & 0.10G & 0.17G  \\
          130M & TSR & 256(96) & 50 & 3.50 & 0.032G & 0.20G & 0.38G\\
          350M & TSR & 256(128) & 50 & 3.30 & 0.062G & 0.52G & 1.02G\\
          \bottomrule
        \end{tabular}
      \end{sc}
    \end{small}
  \end{center}
  \vskip -0.1in
\end{table*}

\subsection{Details of Fine-Tuning on GLUE}
We fine-tune the pre-trained RoBERTa-Base model on the GLUE benchmark.We trained the model for 30 epochs with a batch size of 16 for all tasks except for CoLA, which uses a batch size
of 32.We tune the learning rate and scale factor for TSR. Table 7 shows the hyperparameters used for fine-tuning
RoBERTa-Base for TSR.

\begin{figure*}[t]
  \vskip 0.1in
  \begin{center}
    \centerline{\includegraphics[width=\textwidth]{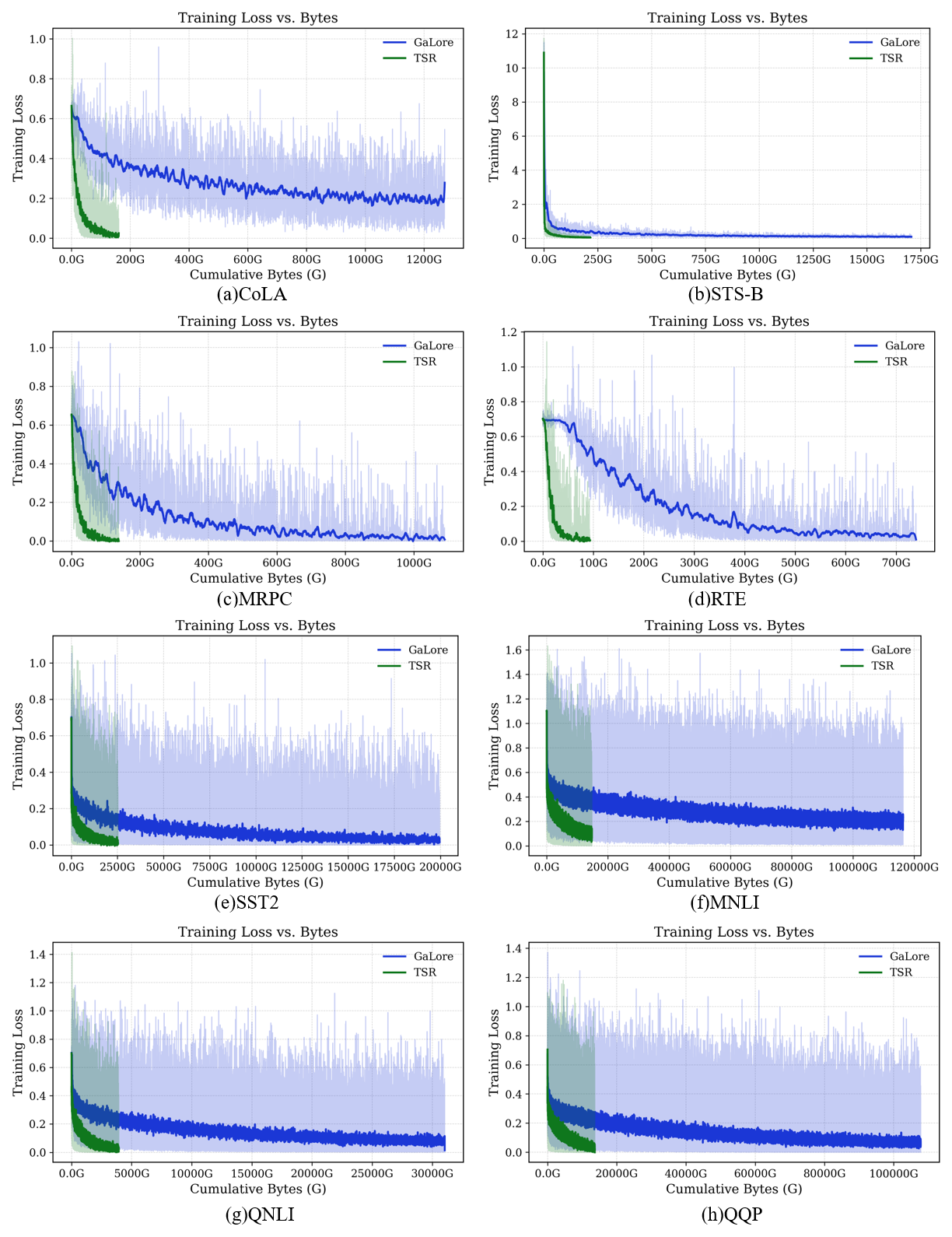}}
    \caption{
      \textbf{Fine-tuning Loss-Byte curves on GLUE tasks .} 
    }
    \label{app_fig:SFT}
  \end{center}
  \vskip -0.1in
\end{figure*}

\begin{table*}[th]
  \caption{Hyperparameters of fine-tuning RoBERTa-Base for TSR.}
  \label{app_tab:finetune}
  \begin{center}
  \setlength{\tabcolsep}{2pt}
    \begin{small}
      \begin{sc}
        \begin{tabular}{lcccccccc}
          \toprule 
           &CoLA&STS-B & MRPC &RTE&SST2 &MNLI& QNLI&QQP \\
          \midrule
          Batch Size&32&16&16&16&16&16&16&16\\
          Epochs&30&30&30&30&30&30&30&30\\
          Learning Rate&$1E-05$&$3E-05$&$2E-05$&$2E-05$&$2E-05$&$1E-05$&$1E-05$&$2E-05$\\
          Scaling Factor&8&2&2&2&2&2&2&2\\
          Max Seq. Len.&&&&512&&&&\\
          \bottomrule
        \end{tabular}
      \end{sc}
    \end{small}
  \end{center}
  \vskip -0.1in
\end{table*}


\end{document}